\newcommand{\sat}{\models}
\newcommand{\nsat}{\hspace{2px}\mid\hspace{-2px}\neq\hspace{1px}}
\newcommand{\cond}[2][]{(\ifthenelse{\equal{#1}{}}{#2}{#1}|#2)}
\renewcommand{\phi}{\varphi}
\newcommand{\tthen}{\text{ then }}
\newcommand{\tand}{\text{ and }}
\newcommand{\tall}{\text{ for all }}
\newcommand{\twith}{\text{ with }}
\newcommand{\krs}{\kappa_{|\Sigma'}}
\newcommand{\dfp}[2][]{\textbf{(DFP-#2)#1}}
\newcommand{\dfpessr}[1]{\dfpes{#1}{$_\Sigma$}}
\newcommand{\dfpesl}[1]{\dfpes{#1}{$_\mathcal{L}$}}
\newcommand{\dfpes}[1]{\textbf{(DFPes-#1)}}
\newcommand{\forgetL}{\circ_f^\mathcal{L}}
\newcommand{\forgetS}{\circ_f^\Sigma}
\newcommand{\forgetSm}{\circ_f^{\Sigma , m}}
\newcommand{\lb}{\llbracket}
\newcommand{\rb}{\rrbracket}
\newcommand{\pt}{\hspace{1pt}}
\crefname{definition}{Def.}{Def.}
\crefname{theorem}{Th.}{Th.}
\crefname{corollary}{Cor.}{Cor.}
\crefname{proposition}{Prop.}{Prop.}
\crefname{lemma}{Lem.}{Lem.}
\crefname{example}{Ex.}{Ex.}
\crefname{table}{Tab.}{Tab.}
\crefname{observation}{Obs.}{Obs.}
\newtheorem{definition}{Definition}
\newtheorem{theorem}{Theorem}
\newtheorem{corollary}{Corollary}
\newtheorem{proposition}{Proposition}
\newtheorem{lemma}{Lemma}
\newtheorem{example}{Example}
\title{Forgetting Formulas and Signature Elements in Epistemic States}
\author{%
Alexander Becker$^1$\and
Gabriele Kern-Isberner$^1$\and
Kai Sauerwald$^2$\and
Christoph Beierle$^2$
\affiliations
$^1$TU Dortmund University, Germany\\
$^2$FernUniversit\"at in Hagen, Germany
\emails
\{alexander2.becker, gabriele.kern-isberner\}@tu-dortmund.de,\\
\{christoph.beierle, kai.sauerwald\}@fernuni-hagen.de
}
\begin{document}

\maketitle

\begin{abstract}
	Delgrande's knowledge level account of forgetting provides a
	general approach to forgetting syntax elements from sets of formulas
	with links to many other forgetting operations, in particular, to
	Boole's variable elimination. On the other hand, marginalisation of
	epistemic states is a specific approach to actively reduce signatures in
	more complex semantic frameworks, also aiming at forgetting atoms that
	is very well known from probability theory. In this paper, we bring
	these two perspectives of forgetting together by showing that
	marginalisation can be considered as an extension of Delgrande's
	approach to the level of epistemic states. More precisely, we generalize
	Delgrande's axioms of forgetting to forgetting in epistemic states, and
	show that marginalisation is the most specific and informative
	forgetting operator that satisfies these axioms. Moreover, we elaborate
	suitable phrasings of Delgrande's concept of forgetting for formulas by
	transferring the basic ideas of the axioms to forgetting formulas from
	epistemic states. However, here we show that this results in trivial
	approaches to forgetting formulas. This finding supports the claim that
	forgetting syntax elements is essentially different from belief
	contraction, as e.g. axiomatized in the AGM belief change framework.
\end{abstract}

\section{Introduction}
In the past decade, the popularity and presence of artificial intelligence (AI) grew rapidly and thereby reached almost every part of our daily lives. From product and media recommendations, voice assistants, and smart homes over industrial optimizations, medical research, and traffic, to even criminal prosecution. And most probably, the importance of AI will grow even further in the near future, due to the ever-increasing amount of data that accumulates day by day and the huge potential it carries. However, so far only little attention was given to the concept of forgetting, even though it plays an essential role in many areas of our daily lives as well. In 2018 the General Data Protection Regulation (GDPR) became applicable, which gives every citizen of the European Union the right to be forgotten (GDPR - Article 17).
This raises the question what it actually means to forget something, and whether it is sufficient to only delete some data in order to forget certain information. This is clearly not the case, since AI systems fitted on this data might still be able to infer the information we like to forget. Thus, forgetting is far more complex than just deleting data. From a cognitive point of view, forgetting is an inextricable part of any learning process that helps handling information overload, sort out irrelevant information, and resolve contradictions. Moreover, it is also of importance when it comes to knowledge management in organisational contexts \cite{kluge2019investigating}, socio-digital systems \cite{ellwart2019intentional}, and domains with highly dynamic information such as supply chain and network management. These few examples illustrate the importance of forgetting in AI systems to guarantee individual privacy and informational self-determination, but also efficient reasoning by blinding out irrelevant information.

In the domain of logic and knowledge representation, several logic-specific forgetting definitions exist, e.g. Boole's variable elimination \cite{boole1854investigation},  fact forgetting in first-order logic \cite{lin1994forget} and forgetting in modal logic \cite{baral2005knowledge}. However, none of these specific approaches argued about the general notions of forgetting, but rather provided a way to compute its result. In \cite{delgrande2017knowledge}, Delgrande presented a general forgetting approach with the goal to unify many of the hitherto existing logic-specific approaches. Moreover, he stated a set of properties he refers to as \emph{right} and \emph{desirable} when it comes to the notions of forgetting. In contrast to Delgrande's approach, Beierle et al. \shortcite{beierle2019towards} presented a general framework for cognitively different kinds of forgetting, which also consider the common-sense understanding of forgetting, and their realisation by means of ordinal conditional functions. In the following, we take this broad, common-sense motivated view of forgetting in contrast to the viewpoint put forward by Delgrande, who explicitly states that e.g. the belief change of contraction should not be considered as forgetting.

In this work, we show that Delgrande's forgetting approach is included in and even generalized by the cognitively different kinds of forgetting presented in \cite{beierle2019towards}, concretely by means of the marginalisation. Moreover, we show that the forgetting properties Delgrande refers to as \emph{right} and \emph{desirable} are not suitable to axiomatise the general properties of all kinds of forgetting, but only of those that aim to forget signature elements instead of formulas. Thus, the here presented results form another step towards a general framework for different kinds of forgetting, and provide a deeper understanding of their properties and inherent differences.

Finally, we want to give an overview of how this work is structured. In \Cref{sec:formal basics}, we give all the preliminaries needed in the later sections including model theoretical basics and ordinal conditional functions. Then we will present both of the above-mentioned general forgetting approaches in \Cref{sec:forgetting} and show that the marginalisation extends Delgrande's forgetting to epistemic states, since both approaches always result in the same posterior beliefs. In \Cref{sec:postulates}, we will then generalize and extend the properties stated by Delgrande to epistemic states, and show that the marginalisation satisfies all of them. Moreover, we show that the marginalisation is the most specific approach satisfying these properties. Finally, we extend the same properties to forgetting formulas in epistemic states and show that they are not suitable for axiomatizing general properties of forgetting, since they imply trivial approaches of forgetting formulas. In \Cref{sec:conclusion}, we present our conclusions as well as some outlooks for future works.

\section{Formal Basics}\label{sec:formal basics}
In the following, we introduce the formal basics as needed in this work. With $\mathcal{L}_\Sigma$ we state a propositional language over the finite signature $\Sigma$ with formulas $\phi, \psi \in \mathcal{L}_\Sigma$. The corresponding interpretations are denoted as $\Omega_\Sigma$. The interpretations $\omega \in \Omega_{\Sigma}$ that satisfy a formula $\phi \in \mathcal{L}_\Sigma$, i.e. $\omega \models \phi$, are called models of $\phi$ and are denoted as $\lb \phi \rb_\Sigma$. If the signature of a model set is unambiguously given by the context, we also write $\lb \phi \rb$ instead. The explicit declaration of the corresponding signature is of particular importance when arguing about different (sub-)signatures. Moreover, each model $\omega \in \Omega_{\Sigma}$ can also be considered as a conjunction of literals corresponding to the truth values $\omega$ assigns to each signature element $\rho \in \Sigma$. Thus, we can also write $\omega \sat \omega'$, where $\omega, \omega' \in \Omega_{\Sigma}$, but $\omega'$ is considered to be the conjunction of literals corresponding to the interpretation. Note that we will make use of this notation several times in this paper. When we specifically want to argue about some signature elements in an interpretation $\omega \in \Omega_\Sigma$, we denote those signature elements $\rho \in \Sigma$ as $\dot{\rho}$ for which the concrete truth assignment is not needed, e.g. $p\dot{b}\dot{f} \in \Omega_{\Sigma}$ with $\Sigma = \{p,b,f\}$.
For two formulas $\phi, \psi \in \mathcal{L}_\Sigma$, we say that $\phi$ infers $\psi$, denoted as $\phi \models_\Sigma \psi$, if and only if $\lb \phi \rb \subseteq \lb \psi \rb$. In case that both model sets are equal, $\phi$ and $\psi$ are equivalent, i.e. $\phi \equiv \psi$, iff $\phi \models \psi$ and $\psi \models \phi$.
Furthermore, the deductively closed set of all formulas that can be inferred from a formula $\phi \in \mathcal{L}_\Sigma$ is given by $Cn_\Sigma(\phi) = \{ \psi \in \mathcal{L}_\Sigma \mid \phi \models_\Sigma \psi \}$. Again, the signature in the index of the $Cn$ operator as well as $\models$ can be omitted when its clearly given by the context. Notice that a formula $\phi \in \mathcal{L}_\Sigma$ is always equivalent to its deductive closure, since their models are equal. The deductive closure $Cn_\Sigma(\phi)$ of a formula $\phi \in \mathcal{L}_\Sigma$ can also be expressed by means of the theory $Th(\lb \phi \rb) = \{ \psi \in \mathcal{L}_\Sigma \mid \lb \phi \rb \sat \psi \}$ of its models $\lb \phi \rb$.
All of the above-mentioned formal basics also hold for sets of formulas $\Gamma \subseteq \mathcal{L}_\Sigma$.

In order to argue about inferences and models in different (sub-)signatures, further basic terms are needed. For two interpretations $\omega, \omega' \in \Omega_\Sigma$, we say that $\omega$ and $\omega'$ are elementary equivalent with the exception of the signature elements $P$, denoted as $\omega \equiv_P \omega'$, if and only if they agree on the truth values they assign to all signature elements in $\Sigma \setminus P$ \cite{delgrande2017knowledge}. Furthermore, we define the reduction and expansion of models in \Cref{def:reduct_extension}, which allow us to argue about models in sub- or super-signatures as well. 

\begin{definition}\label{def:reduct_extension}\textnormal{\cite{delgrande2017knowledge}}
	Let $\Sigma' \subseteq \Sigma$ be signatures and $\phi \in \mathcal{L}_\Sigma$, $\phi' \in \mathcal{L}_{\Sigma'}$ formulas. The \emph{reduction to $\Sigma'$} of models $\lb \phi \rb_\Sigma$ is defined as
	\begin{align*}
	& (\lb \phi \rb_\Sigma)_{|\Sigma'} 
	=  \{ \omega' \in \Omega_{\Sigma'} \mid \text{there is } \omega \in \lb \phi \rb_\Sigma \text{ s.t.\ } \omega \sat_\Sigma \omega' \}.
	\end{align*}
	The \emph{expansion to $\Sigma$} of models $\lb \phi' \rb_{\Sigma'}$ is defined as
	\begin{equation*}
	(\lb \phi' \rb_{\Sigma'})_{\uparrow \Sigma} = \underset{\omega' \in \lb \phi' \rb_{\Sigma'}}{\bigcup} \omega'_{\uparrow \Sigma},
	\end{equation*}
	where $\omega'_{\uparrow \Sigma} = \{ \omega \in \Omega_{\Sigma} \mid \omega \models_\Sigma \omega' \}$.
	Thereby, $\omega \sat_{\Sigma} \omega'$ denotes that $\omega \in \Omega_{\Sigma}$ is more specific than $\omega' \in \Omega_{\Sigma'}$ w.r.t. $\Sigma$, which holds if and only if $\omega_{|\Sigma'} = \omega'$.
\end{definition}

Notice that multiple subsequently performed reductions $(\lb \phi \rb_{|\Sigma'})_{|\Sigma''}$ can be reduced to a single reduction $\lb \phi \rb_{|\Sigma''}$, if the signature $\Sigma''$ is a subset of $\Sigma'$. %

In this work, we generally argue about epistemic states in the form of ordinal conditional functions (OCFs) introduced in a more general form by Spohn \shortcite{spohn1988ordinal}. An OCF $\kappa$ is a ranking function that assigns a rank $r \in \mathbb{N}_0$ to each interpretation $\omega \in \Omega_{\Sigma}$ with $\kappa^{-1}(0) \neq \emptyset$. The rank of an interpretation can be understood as a degree of plausibility, where $\kappa(\omega) = 0$ means that $\omega$ is most plausible. The most plausible interpretations according to an OCF $\kappa$ are also called models of $\kappa$, and are therefore denoted by $\lb \kappa \rb_\Sigma$. The rank of formula $\kappa(\phi) = \min \{ \kappa(\omega) \mid \omega \in \lb \phi \rb \}$ is given by the minimal rank of its models, where $\kappa(\phi \vee \psi) = \min \{ \kappa(\phi), \kappa(\psi) \}$. The beliefs of an OCF $Bel_\Sigma(\kappa) = \{ \phi \in \mathcal{L}_\Sigma \mid \lb \kappa \rb \sat \phi \}$ is the deductively closed set of formulas $\phi \in \mathcal{L}_\Sigma$ that are satisfied by the OCF's models $\lb \kappa \rb_\Sigma$. Instead of $Bel_\Sigma(\kappa) \sat \phi$, we also write $\kappa \sat \phi$.

\section{Delgrande's Forgetting and Marginalisation}\label{sec:forgetting}
In this section, we will first introduce Delgrande's general forgetting approach \cite{delgrande2017knowledge} as well as some of its most important properties. Afterwards, we consider the OCF marginalisation as a kind of forgetting \cite{beierle2019towards} and show that it generalizes Delgrande's definition to epistemic states.

\subsection{Delgrande's General Forgetting Approach}\label{sec:delgrande}
In \cite{delgrande2017knowledge}, Delgrande defines a general forgetting approach with the goal to unify many of the hitherto existing logic-specific forgetting definitions, e.g. forgetting in propositional logic \cite{boole1854investigation}, first-order logic \cite{lin1994forget}, or answer set programming \cite{wong2009forgetting,zhang2006solving}.
While most of these logic-specific approaches depend on the syntactical structure of the knowledge, Delgrande defines forgetting on the knowledge level itself, which means that it is independent of any syntactical properties, and only argues about the beliefs that can be inferred. Concretely, this is realized by arguing about the deductive closure $Cn_\Sigma(\Gamma)$ of a set of formulas $\Gamma$ as seen in \Cref{def:forget_delgrande}

\begin{definition}\label{def:forget_delgrande}\textnormal{\cite{delgrande2017knowledge}}
	Let $\Sigma$ and $P$ be signatures, $\mathcal{L}_{\Sigma}$ a language with corresponding consequence operator $Cn_\Sigma$, and $\mathcal{L}_{\Sigma \setminus P} \subseteq \mathcal{L}_{\Sigma}$ a sub-language, then \emph{forgetting a signature} $P$ in a set of formulas $\Gamma \subseteq \mathcal{L}_\Sigma$ is defined as
	\begin{equation*}
	\mathcal{F}(\Gamma, P) = Cn_\Sigma (\Gamma) \cap \mathcal{L}_{\Sigma \setminus P}.
	\end{equation*}
\end{definition}

By intersecting the prior knowledge $Cn_\Sigma(\Gamma)$ with the sub-language $\mathcal{L}_{\Sigma \setminus P}$ all formulas that mention any signature element $\rho \in P$ will be removed. Therefore, forgetting according to \Cref{def:forget_delgrande} results in those consequences of $\Gamma$ that are included in the reduced language $\mathcal{L}_{\Sigma \setminus P}$.
However, since many of the logic-specific forgetting approaches do not result in a sub-language, Delgrande provides a second definition of forgetting that results in the original language instead (\Cref{def:forget_delgrande_orig_signature}). This allows comparing the results of the different forgetting approaches more easily.

\begin{definition}\label{def:forget_delgrande_orig_signature}\textnormal{\cite{delgrande2017knowledge}}
	Let $\Sigma$ and $P$ be signatures and $\mathcal{L}_\Sigma$ a language with corresponding consequence operator $Cn_\Sigma$, then \emph{forgetting a signature $P$ in the original language $\mathcal{L}_\Sigma$} in a set of formulas $\Gamma \subseteq \mathcal{L}_\Sigma$ is defined as
	\begin{equation*}
	\mathcal{F}_O(\Gamma, P) = Cn_\Sigma (\mathcal{F}(\Gamma, P)).
	\end{equation*}
\end{definition}

Thereby, forgetting in the original language $\mathcal{L}_\Sigma$ is defined as the deductive closure of $\mathcal{F}(\Gamma, P)$ with respect to $\Sigma$. Due to the syntax independent nature of Delgrande's forgetting definition, it is theoretically applicable to each logic with a well-defined consequence operator. Note that even though the posterior knowledge still consists of formulas mentioning the forgotten signature elements $P$, we know that they do not provide any information about $P$, since forgetting in the original signature results in knowledge equivalent the result of forgetting in the reduced language, due to the deductive closure $Cn_\Sigma$.
This also follows from the model theoretical properties of both forgetting definitions stated in \Cref{th:delgrande_models_signature}.

\begin{theorem}\textnormal{\cite{delgrande2017knowledge}}\label{th:delgrande_models_signature}
	Let $\Gamma \subseteq \mathcal{L}_\Sigma$ be a set of formulas and P a signature, then the following equations hold:
	\begin{enumerate}
		\item $\lb \mathcal{F}(\Gamma, P) \rb_{\Sigma \setminus P} = (\lb \Gamma \rb_{\Sigma})_{|(\Sigma \setminus P)}$
		\item $\lb \mathcal{F}(\Gamma, P) \rb_{\Sigma} = ((\lb \Gamma \rb_\Sigma)_{|(\Sigma \setminus P)})_{\uparrow\Sigma}$
	\end{enumerate}
\end{theorem}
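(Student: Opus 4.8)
The plan is to prove both equations by double inclusion, relying throughout on a single locality fact: since every formula $\psi \in \mathcal{L}_{\Sigma \setminus P}$ mentions only signature elements from $\Sigma \setminus P$, its truth value under an interpretation $\omega \in \Omega_\Sigma$ is completely determined by the restriction $\omega_{|\Sigma \setminus P}$. Formally, whenever $\omega \sat_\Sigma \omega'$ (that is, $\omega_{|\Sigma \setminus P} = \omega'$), we have $\omega \models_\Sigma \psi$ iff $\omega' \models_{\Sigma \setminus P} \psi$. I would state and justify this as a preliminary observation, since it is the workhorse of both parts.

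For part 1, write $\Sigma' = \Sigma \setminus P$. The inclusion $(\lb \Gamma \rb_\Sigma)_{|\Sigma'} \subseteq \lb \mathcal{F}(\Gamma, P) \rb_{\Sigma'}$ is the easy direction: given $\omega' \in (\lb \Gamma \rb_\Sigma)_{|\Sigma'}$, pick a witness $\omega \in \lb \Gamma \rb_\Sigma$ with $\omega \sat_\Sigma \omega'$; for any $\psi \in \mathcal{F}(\Gamma, P) = Cn_\Sigma(\Gamma) \cap \mathcal{L}_{\Sigma'}$ we have $\Gamma \models_\Sigma \psi$ and hence $\omega \models_\Sigma \psi$, so the locality observation yields $\omega' \models_{\Sigma'} \psi$. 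As $\psi$ was arbitrary, $\omega'$ is a model of $\mathcal{F}(\Gamma, P)$.

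The converse inclusion is where I expect the main obstacle, and it is here that the finiteness of $\Sigma$ together with the paper's convention of reading an interpretation as a conjunction of literals pays off. Given $\omega' \in \lb \mathcal{F}(\Gamma, P) \rb_{\Sigma'}$, suppose for contradiction that $\omega' \notin (\lb \Gamma \rb_\Sigma)_{|\Sigma'}$. Then no $\Sigma$-interpretation $\omega$ with $\omega \sat_\Sigma \omega'$ lies in $\lb \Gamma \rb_\Sigma$, i.e.\ the $\Sigma$-models of the characteristic conjunction $\omega'$ are disjoint from $\lb \Gamma \rb_\Sigma$. This means $\Gamma \models_\Sigma \neg \omega'$, and since $\neg \omega' \in \mathcal{L}_{\Sigma'}$ we obtain $\neg \omega' \in Cn_\Sigma(\Gamma) \cap \mathcal{L}_{\Sigma'} = \mathcal{F}(\Gamma, P)$. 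But $\omega'$ cannot satisfy $\neg \omega'$, contradicting that $\omega'$ is a model of $\mathcal{F}(\Gamma, P)$. Hence $\omega' \in (\lb \Gamma \rb_\Sigma)_{|\Sigma'}$, completing part 1. The subtle point to get right is that $\neg \omega'$ genuinely is a formula of the reduced language $\mathcal{L}_{\Sigma'}$ and that $\Gamma \models_\Sigma \neg \omega'$ holds precisely when $\omega'$ admits no model-expansion satisfying $\Gamma$.

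For part 2, I would first invoke part 1 to rewrite the claim: since $(\lb \Gamma \rb_\Sigma)_{|\Sigma'} = \lb \mathcal{F}(\Gamma, P) \rb_{\Sigma'}$, it suffices to show $\lb \Delta \rb_\Sigma = (\lb \Delta \rb_{\Sigma'})_{\uparrow \Sigma}$ for the set $\Delta = \mathcal{F}(\Gamma, P) \subseteq \mathcal{L}_{\Sigma'}$. Both inclusions then follow directly from the locality observation: if $\omega \in \lb \Delta \rb_\Sigma$, then $\omega' := \omega_{|\Sigma'}$ satisfies $\Delta$ and $\omega \in \omega'_{\uparrow \Sigma}$; conversely, any $\omega \in \omega'_{\uparrow \Sigma}$ with $\omega' \in \lb \Delta \rb_{\Sigma'}$ agrees with $\omega'$ on $\Sigma'$ and therefore satisfies every formula of $\Delta$. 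This part is routine once part 1 is established, so the genuine work is concentrated in the witness argument above.
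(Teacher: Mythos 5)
Your proof is correct. Note that the paper itself gives no proof of this theorem---it is imported verbatim from Delgrande's work---so there is nothing internal to compare against; your argument is the standard semantic one, and both halves are sound: the locality observation (truth of a $\mathcal{L}_{\Sigma\setminus P}$-formula depends only on the restriction of the interpretation) correctly yields the easy inclusion of part~1 and all of part~2, and the witness argument for the hard inclusion of part~1, which exploits finiteness of $\Sigma$ to form the literal conjunction $\omega'$ and place $\neg\omega'$ in $Cn_\Sigma(\Gamma)\cap\mathcal{L}_{\Sigma\setminus P}$, is exactly the right move and also covers the degenerate case of inconsistent $\Gamma$.
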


From \Cref{th:delgrande_models_signature}, we can conclude that the models of forgetting in the original language are equal to those of forgetting in the reduced language with respect to $\Sigma$ (\Cref{cor:models_delgrande_forgetting_orig}).

\begin{corollary}\label{cor:models_delgrande_forgetting_orig}
	Let $\Gamma \subseteq \mathcal{L}_\Sigma$ be a set of formulas and $P$ a signature, then the following holds:
	\begin{equation*}
	\lb \mathcal{F}_O(\Gamma, P) \rb_\Sigma = (\lb \mathcal{F}(\Gamma, P) \rb_{\Sigma \setminus P})_{\uparrow \Sigma} = \lb \mathcal{F}(\Gamma, P) \rb_\Sigma
	\end{equation*}
\end{corollary}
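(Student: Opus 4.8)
The plan is to read the displayed chain of equalities as two separate identities and to prove each from the results already in hand, rather than re-deriving anything from scratch. For the rightmost equality, $(\lb \mathcal{F}(\Gamma, P) \rb_{\Sigma \setminus P})_{\uparrow \Sigma} = \lb \mathcal{F}(\Gamma, P) \rb_\Sigma$, I would simply feed the two parts of \Cref{th:delgrande_models_signature} into one another: part~1 rewrites $\lb \mathcal{F}(\Gamma, P) \rb_{\Sigma \setminus P}$ as the reduction $(\lb \Gamma \rb_\Sigma)_{|(\Sigma \setminus P)}$, so that expanding it to $\Sigma$ yields exactly $((\lb \Gamma \rb_\Sigma)_{|(\Sigma \setminus P)})_{\uparrow \Sigma}$, while part~2 identifies this same expansion with $\lb \mathcal{F}(\Gamma, P) \rb_\Sigma$. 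Both sides thus collapse to the identical reduct-then-expand of $\lb \Gamma \rb_\Sigma$, and no further computation is needed.

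For the leftmost equality, $\lb \mathcal{F}_O(\Gamma, P) \rb_\Sigma = \lb \mathcal{F}(\Gamma, P) \rb_\Sigma$, I would unfold the definition $\mathcal{F}_O(\Gamma, P) = Cn_\Sigma(\mathcal{F}(\Gamma, P))$ and invoke the basic observation from \Cref{sec:formal basics} that a set of formulas is equivalent to its deductive closure, since the two have the same models. Concretely, $\mathcal{F}(\Gamma, P) \subseteq Cn_\Sigma(\mathcal{F}(\Gamma, P))$ already gives the inclusion $\lb Cn_\Sigma(\mathcal{F}(\Gamma, P)) \rb_\Sigma \subseteq \lb \mathcal{F}(\Gamma, P) \rb_\Sigma$, and conversely every $\omega \in \lb \mathcal{F}(\Gamma, P) \rb_\Sigma$ satisfies every $\Sigma$-consequence of $\mathcal{F}(\Gamma, P)$ by the very definition of $Cn_\Sigma$, which yields the reverse inclusion. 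I would emphasise that both the closure and the models are taken over the full signature $\Sigma$, where $\mathcal{F}(\Gamma, P)$ is viewed inside $\mathcal{L}_\Sigma$ via $\mathcal{L}_{\Sigma \setminus P} \subseteq \mathcal{L}_\Sigma$, so that this model-preservation argument applies verbatim.

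Combining the two identities gives the full statement. The only point requiring care, and the step I expect to be the main obstacle, is the signature bookkeeping in the leftmost equality: one must check that taking the $\Sigma$-deductive closure of a set of formulas that happens to live in the sub-language $\mathcal{L}_{\Sigma \setminus P}$ does not alter its $\Sigma$-models. This becomes immediate once one notes that model-preservation under deductive closure holds over any fixed signature, here $\Sigma$, irrespective of which sub-language the generating formulas are drawn from.
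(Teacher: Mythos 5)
Your proposal is correct and matches the derivation the paper intends: the corollary is left as an immediate consequence of \Cref{th:delgrande_models_signature}, and your two-step decomposition (chaining parts~1 and~2 of that theorem for the right-hand equality, and using model-invariance of $Cn_\Sigma$ applied to $\mathcal{F}_O(\Gamma,P)=Cn_\Sigma(\mathcal{F}(\Gamma,P))$ for the left-hand one) is exactly the intended argument. Your explicit attention to the fact that $\mathcal{F}(\Gamma,P)\subseteq\mathcal{L}_{\Sigma\setminus P}\subseteq\mathcal{L}_\Sigma$ is evaluated over $\Sigma$-models is a welcome bit of care that the paper leaves tacit.
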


In \Cref{ex:delgrande_example} below, we illustrate the relations of both forgetting definitions stated by Delgrande.

\begin{example}\label{ex:delgrande_example}
	In this example, we illustrate both Delgrande's forgetting in the reduced as well as in the original language, and its effects on the model level. For this, we consider the knowledge base $\Gamma = \{ p \rightarrow b, f \rightarrow \overline{p}, f \rightarrow b, \overline{f} \rightarrow (p \vee \overline{b}) \} \subseteq \mathcal{L}_\Sigma$ with $\Sigma = \{ p, b, f \}$, where the signature elements can be read as:
	\begin{align*}
		p &- \text{the observed animal is a penguin}, \\
		b &- \text{the observed animal is a bird}, \\
		f &- \text{the observed animal can fly}.
	\end{align*}
	Thus, $\overline{f} \rightarrow (p \vee \overline{b})$ for example reads \emph{if the observed animal cannot fly, then it is a penguin or not a bird at all}. In the following, we want to forget the subsignature $\{ p \} \subseteq \Sigma$. Forgetting $\{p\}$ in the reduced language $\mathcal{L}_{\Sigma \setminus \{p\}}$ results in
	\begin{equation*}
		\mathcal{F}(\Gamma, \{p\}) = Cn_\Sigma(\Gamma) \cap \mathcal{L}_{\Sigma \setminus \{p\}} = Th_\Sigma(\lb \Gamma \rb_\Sigma) \cap \mathcal{L}_{\Sigma \setminus \{p\}},
	\end{equation*}
	where $\lb \Gamma \rb_\Sigma = \{ \overline{p}\overline{b}\pt\overline{f}, pb\overline{f}, \overline{p}bf \}$. Concretely, $\mathcal{F}(\Gamma, \{p\})$ consists of all conclusions that can be drawn from $\Gamma$ and are part of the reduced language $\mathcal{L}_{\Sigma \setminus \{p\}}$, i.e. those conclusions that do not argue about penguins ($p$).
	According to \Cref{th:delgrande_models_signature}, we know that the models after forgetting $\{p\}$ from $\Gamma$ correspond to the prior models $\lb \Gamma \rb_\Sigma$ reduced to $\Sigma \setminus \{p\}$:
	\begin{align*}
		& ~\lb \mathcal{F}(\Gamma, \{p\}) \rb_{\Sigma \setminus \{p\}} = (\lb \Gamma \rb_\Sigma)_{|\Sigma \setminus \{p\}} \\
		= &~\{ \overline{p}\overline{b}\pt\overline{f}, pb\overline{f}, \overline{p}bf \}_{|\Sigma \setminus \{p\}} = \{ \overline{b}\pt\overline{f}, b\overline{f}, bf \}.
	\end{align*}
	Thus, the posterior models after forgetting $\{p\}$ are obtain by mapping each interpretation $\dot{p}\dot{b}\dot{f}$ to $\dot{b}\dot{f}$.
	
	If we forget $\{p\}$ in the original language $\mathcal{L}_\Sigma$ instead, we obtain
	\begin{align*}
		& ~\mathcal{F}_O(\Gamma, \{p\}) = Cn_\Sigma(\mathcal{F}(\Gamma, \{p\})) = Th(\lb \mathcal{F}(\Gamma, \{p\})\rb_\Sigma) \\
		= &~Th((\lb \mathcal{F}(\Gamma, \{p\})\rb_{\Sigma \setminus \{p\}})_{\uparrow \Sigma}) = Th(((\lb \Gamma \rb_\Sigma)_{|\Sigma \setminus \{p\}})_{\uparrow \Sigma}).
	\end{align*}
	By means of the deductive closure of $\mathcal{F}(\Gamma, \{p\})$ with respect to $\Sigma$, the result of forgetting in the reduced language is extended by those formulas $\phi \in \mathcal{L}_\Sigma$ in the original language that can be inferred by it. However, due to the relations of the prior models $\lb \Gamma \rb_\Sigma$ and those after forgetting $\{p\}$ in the reduced and the original language
	\begin{align*}
		&~\lb \mathcal{F}_O(\Gamma, \{p\}) \rb_\Sigma = ((\lb \Gamma \rb_\Sigma)_{|\Sigma \setminus \{p\}})_{\uparrow \Sigma}\\
		= &~(\{ \overline{p}\overline{b}\pt\overline{f}, pb\overline{f}, \overline{p}bf \}_{|\Sigma \setminus \{p\}})_{\uparrow \Sigma} = \{ \overline{b}\pt\overline{f}, b\overline{f}, bf \}_{\uparrow \Sigma} \\
		=& ~\{ \overline{p}\overline{b}\pt\overline{f}, p\overline{b}\pt\overline{f}, pb\overline{f}, \overline{p}b\overline{f}, \overline{p}bf, pbf \},
	\end{align*}
	we see that $\mathcal{F}_O(\Gamma, \{p\})$ can only contain trivial proposition about penguins ($p$), since we know that if $p\dot{b}\dot{f} \in \lb \mathcal{F}_O(\Gamma, \{p\}) \rb$, then $\overline{p}\dot{b}\dot{f} \in \lb \mathcal{F}_O(\Gamma, \{p\}) \rb$ must hold as well. This way non-trivial propositions about penguins are prevented, which is why forgetting in the original language can still be considered as forgetting ${p}$. We provide an overview of the different models in \Cref{tab:delgrande_example}.
	
	\begin{table}
		\centering
		\begin{tabular}{| c | c | c |}
			\hline
			$\lb \Gamma \rb_\Sigma$ & $\lb \mathcal{F}(\Gamma, \{p\}) \rb_{\Sigma \setminus \{p\}}$ & $\lb \mathcal{F}_O(\Gamma, \{p\}) \rb_\Sigma$ \\ \hline \hline
			\multirow{2}{*}{$\overline{p}\overline{b}\pt\overline{f}$, $pb\overline{f}$, $\overline{p}bf$} & \multirow{2}{*}{$\overline{b}\pt\overline{f}$, $b\overline{f}$, $bf$} & $\overline{p}\overline{b}\pt\overline{f}$, $p\overline{b}\pt\overline{f}$, $\overline{p}b\overline{f}$,\\
			&& $pb\overline{f}$, $\overline{p}bf$, $pbf$ \\ \hline
		\end{tabular}
		\caption{
			Models of $\Gamma$, $\mathcal{F}(\Gamma, \{p\})$, and $\mathcal{F}_O(\Gamma, \{p\})$ with respect to the corresponding signatures of the languages, where
			$\Gamma = \{ p \rightarrow b, f \rightarrow \overline{p}, f \rightarrow b, \overline{f} \rightarrow (p \vee \overline{b}) \} \subseteq \mathcal{L}_\Sigma$ and $\Sigma = \{p,b,f\}$.}
		\label{tab:delgrande_example}
	\end{table}
\end{example}

Besides defining a general forgetting approach, Delgrande also states several properties of his definition, which he refers to as \emph{right} and \emph{desirable} \cite{delgrande2017knowledge}. In this work, we refer to these properties as \dfp{1}-\dfp{7} as stated in \Cref{th:delgrande_postulates}.

\begin{theorem}\label{th:delgrande_postulates}\textnormal{\cite{delgrande2017knowledge}}
	Let $\mathcal{L}_\Sigma$ be a language over signature $\Sigma$ and $Cn_\Sigma$ the corresponding consequence operator, then the following relations hold for all sets of formulas $\Gamma, \Gamma' \subseteq \mathcal{L}_\Sigma$ and signatures $P, P'$.
	\begin{description}
		\item[(DFP-1)] $\Gamma \sat \mathcal{F}(\Gamma, P)$
		\item[(DFP-2)] If $\Gamma \sat \Gamma'$, then $\mathcal{F}(\Gamma, P) \sat \mathcal{F}(\Gamma', P)$
		\item[(DFP-3)] $\mathcal{F}(\Gamma, P) = Cn_{\Sigma \setminus P}(\mathcal{F}(\Gamma, P))$
		\item[(DFP-4)] If $P' \subseteq P$, then $\mathcal{F}(\Gamma, P) = \mathcal{F}(\mathcal{F}(\Gamma, P'), P)$
		\item[(DFP-5)] $\mathcal{F}(\Gamma, P \cup P') = \mathcal{F}(\Gamma, P) \cap \mathcal{F}(\Gamma, P')$
		\item[(DFP-6)] $\mathcal{F}(\Gamma, P \cup P') = \mathcal{F}(\mathcal{F}(\Gamma, P), P')$
		\item[(DFP-7)] $\mathcal{F}(\Gamma, P) = \mathcal{F}_O(\Gamma, P) \cap \mathcal{L}_{\Sigma \setminus P}$
	\end{description}
\end{theorem}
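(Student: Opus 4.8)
The plan is to reduce all seven items to elementary properties of the consequence operator $Cn_\Sigma$ — namely that it is inflationary ($\Gamma \subseteq Cn_\Sigma(\Gamma)$), monotone, and idempotent — together with the purely syntactic identity $\mathcal{L}_A \cap \mathcal{L}_B = \mathcal{L}_{A \cap B}$ for signatures $A, B$, which holds because a formula lies in $\mathcal{L}_C$ exactly when all its atoms belong to $C$. With these tools in hand, I would first dispatch the three ``direct'' properties. \dfp{1} is immediate since $\mathcal{F}(\Gamma, P) = Cn_\Sigma(\Gamma) \cap \mathcal{L}_{\Sigma \setminus P} \subseteq Cn_\Sigma(\Gamma)$, and every formula in $Cn_\Sigma(\Gamma)$ is entailed by $\Gamma$ by definition. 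For \dfp{2}, monotonicity of $Cn_\Sigma$ turns $\Gamma \sat \Gamma'$ into $Cn_\Sigma(\Gamma') \subseteq Cn_\Sigma(\Gamma)$; intersecting with $\mathcal{L}_{\Sigma \setminus P}$ gives $\mathcal{F}(\Gamma', P) \subseteq \mathcal{F}(\Gamma, P)$, and a superset of formulas entails a subset. For \dfp{5}, I would expand both sides and use the language identity to rewrite $\mathcal{L}_{\Sigma \setminus P} \cap \mathcal{L}_{\Sigma \setminus P'}$ as $\mathcal{L}_{\Sigma \setminus (P \cup P')}$.

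The one genuinely semantic ingredient, and the step I expect to be the main obstacle, is \dfp{3}. The inclusion $\mathcal{F}(\Gamma, P) \subseteq Cn_{\Sigma \setminus P}(\mathcal{F}(\Gamma, P))$ is just inflationarity, but the converse requires showing that a reduced-language formula entailed by $\mathcal{F}(\Gamma, P)$ over $\Sigma \setminus P$ already lies in $Cn_\Sigma(\Gamma)$. To bridge the two signatures I would isolate a lemma: for $\Sigma' \subseteq \Sigma$, any $S \subseteq \mathcal{L}_{\Sigma'}$, and any $\psi \in \mathcal{L}_{\Sigma'}$, one has $S \sat_{\Sigma'} \psi$ iff $S \sat_\Sigma \psi$. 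This follows because the truth of a formula over $\Sigma'$ depends only on the restriction of an interpretation to $\Sigma'$: given a $\Sigma$-model of $S$, its reduction is a $\Sigma'$-model, and conversely every expansion of a $\Sigma'$-model is a $\Sigma$-model, so the two entailment relations coincide on $\mathcal{L}_{\Sigma'}$. Granting the lemma, if $\psi \in Cn_{\Sigma \setminus P}(\mathcal{F}(\Gamma, P))$ then $\mathcal{F}(\Gamma, P) \sat_\Sigma \psi$; chaining this with \dfp{1} and transitivity of entailment yields $\Gamma \sat_\Sigma \psi$, and since $\psi \in \mathcal{L}_{\Sigma \setminus P}$ I conclude $\psi \in \mathcal{F}(\Gamma, P)$.

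With \dfp{3} established, the remaining properties follow by careful bookkeeping of the ambient signature under iteration. For \dfp{4}, I note that $\mathcal{F}(\Gamma, P')$ lives over $\Sigma \setminus P'$, so the outer application uses $Cn_{\Sigma \setminus P'}$ and target language $\mathcal{L}_{(\Sigma \setminus P') \setminus P}$; since $P' \subseteq P$ this target is exactly $\mathcal{L}_{\Sigma \setminus P}$. Using \dfp{3} to collapse $Cn_{\Sigma \setminus P'}(\mathcal{F}(\Gamma, P'))$ back to $\mathcal{F}(\Gamma, P')$ and then applying the language identity reduces the right-hand side to $Cn_\Sigma(\Gamma) \cap \mathcal{L}_{\Sigma \setminus P}$. \dfp{6} is proved by the same scheme, this time reading $(\Sigma \setminus P) \setminus P' = \Sigma \setminus (P \cup P')$ and again invoking \dfp{3}; note that here no containment between $P$ and $P'$ is needed. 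Finally, for \dfp{7} I would combine inflationarity and idempotence: $\mathcal{F}(\Gamma, P) \subseteq Cn_\Sigma(\mathcal{F}(\Gamma, P))$ together with $\mathcal{F}(\Gamma, P) \subseteq \mathcal{L}_{\Sigma \setminus P}$ gives one inclusion, while $\mathcal{F}(\Gamma, P) \subseteq Cn_\Sigma(\Gamma)$ with monotonicity and idempotence gives $Cn_\Sigma(\mathcal{F}(\Gamma, P)) \subseteq Cn_\Sigma(\Gamma)$, so intersecting $\mathcal{F}_O(\Gamma, P)$ with $\mathcal{L}_{\Sigma \setminus P}$ lands back in $\mathcal{F}(\Gamma, P)$.
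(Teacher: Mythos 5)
Your proof is correct, but note that the paper does not actually prove this theorem: it is imported verbatim from Delgrande's 2017 paper with a citation and no argument, so there is no in-paper proof to compare against. Judged on its own, your reduction of \dfp{1}, \dfp{2}, \dfp{5}, and \dfp{7} to the Tarskian properties of $Cn_\Sigma$ plus the identity $\mathcal{L}_A \cap \mathcal{L}_B = \mathcal{L}_{A \cap B}$ is sound, and you correctly isolate the only genuinely semantic step: the signature-invariance lemma that $S \sat_{\Sigma'} \psi$ iff $S \sat_{\Sigma} \psi$ for $S, \psi$ over $\Sigma' \subseteq \Sigma$. That lemma is exactly the content that the paper's \Cref{th:delgrande_models_signature} packages model-theoretically (reduction and expansion of model sets), and it is where logic-specific assumptions enter — it holds for classical propositional logic as used here, whereas Delgrande's abstract setting has to postulate enough about $Cn$ to make it go through. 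Your bookkeeping for \dfp{4} and \dfp{6} (taking the ambient consequence operator of the iterated application to be $Cn_{\Sigma \setminus P'}$, collapsing it via \dfp{3}, and then intersecting languages) matches the intended reading of the definitions and gives the right answer; the alternative reading that keeps $Cn_\Sigma$ throughout also works via \dfp{7}, so no issue arises either way.
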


\dfp{1} states the monotony of forgetting, which means that it is not possible to obtain new knowledge by means of forgetting.
\dfp{2} states that any consequence relation $\Gamma \sat \Gamma'$ of prior knowledge sets is preserved after forgetting a signature $P$ in both. \dfp{3} describes that forgetting always results in a deductively closed knowledge set with respect to the reduced signature. This also corresponds to Delgrande's idea of defining forgetting on the knowledge level -- forgetting is applied to a deductively closed set and results in such. In \dfp{4}, Delgrande states that forgetting two signatures $P'$ and $P$ consecutively always equals the forgetting of $P$, if $P'$ is included in $P$. Thus, forgetting a signature twice has no effect on the prior knowledge. \dfp{5} and \dfp{6} argue about iterative and simultaneous forgetting. Finally, \dfp{7} describes the relation between forgetting in the original and the reduced language by stating that the result of forgetting in the reduced language can always be obtained by intersecting the result of forgetting in the original language with the reduced language. Note that we changed the notation of \dfp{7} in order to make it more explicit. For more information on \dfp{1}-\dfp{7} we refer to \cite{delgrande2017knowledge}.

\subsection{Marginalisation}\label{sec:marginalisation}
A general framework of forgetting and its instantiation to an approach using OCFs is developed in \cite{beierle2019towards}. For the purpose of this paper, we concentrate on the marginalisation, which on a cognitive level corresponds to the notion of focussing and can briefly be summarized as:
\begin{enumerate}
	\item Focussing on relevant aspects retains our beliefs about them.
	\item Focussing on relevant aspects (temporarily) changes our beliefs such that they do not contain any information about irrelevant aspects anymore.
\end{enumerate}

In practice, this notion of forgetting is useful when it comes to efficient and focussed query answering by means of abstracting from irrelevant details, e.g. marginalisation is crucially used in all inference techniques for probabilistic networks.
At this point, we consider the relevant aspects to be given and focus on the marginalisation (\Cref{def:marginalization_rank}) as a kind of forgetting as such.
\begin{definition}\cite{beierle2019towards}\label{def:marginalization_rank}
	Let $\kappa$ be an OCF over signature $\Sigma$ and $\omega' \in {\Omega_{\Sigma'}}$ an interpretation with $\Sigma' \subseteq \Sigma$. $\krs$ is called a \emph{marginalisation} of $\kappa$ to $\Sigma'$ with
	\begin{equation*}
		\krs(\omega') = \min\{ \kappa(\omega) \mid \omega \in \Omega_\Sigma \text{ with } \omega \sat \omega' \}.
	\end{equation*}
\end{definition}
By marginalising an OCF to a subsignature $\Sigma'$, we consider interpretations over $\Sigma'$ as conjunctions and assign the corresponding rank to them.

The first notion of focussing corresponds to \Cref{lem:marginalization_equiv}, which states that a formula over the reduced signature is believed after the marginalisation, if and only if it is also believed by the prior OCF. Thus, the beliefs that only argue about the relevant aspects $\Sigma'$ are retained.
\begin{lemma}\label{lem:marginalization_equiv}
	Let $\kappa$ be an OCF over $\Sigma$ and $\Sigma' \subseteq \Sigma$, then for each $\phi \in \mathcal{L}_{\Sigma'}$ the following holds:
	\begin{equation*}
	\kappa_{|\Sigma'} \sat \phi \Leftrightarrow \kappa \sat \phi 
	\end{equation*}
\end{lemma}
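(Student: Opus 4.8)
The plan is to reduce both sides of the claimed equivalence to statements about the rank-$0$ interpretations (the models) of the two OCFs, and then to connect those two model sets through the reduction operation of \Cref{def:reduct_extension}. First I would recall the semantic characterisation of belief: since $Bel_\Sigma(\kappa)$ is deductively closed, $\kappa \sat \phi$ holds exactly when every most plausible interpretation satisfies $\phi$, i.e.\ $\lb \kappa \rb_\Sigma \sat \phi$; analogously, $\krs \sat \phi$ holds exactly when $\lb \krs \rb_{\Sigma'} \sat \phi$. Thus the lemma is equivalent to the statement $\lb \krs \rb_{\Sigma'} \sat \phi \Leftrightarrow \lb \kappa \rb_\Sigma \sat \phi$, for $\phi \in \mathcal{L}_{\Sigma'}$.

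The key step is to show that the models of the marginalisation are exactly the reductions of the models of $\kappa$, namely $\lb \krs \rb_{\Sigma'} = (\lb \kappa \rb_\Sigma)_{|\Sigma'}$. This follows directly from \Cref{def:marginalization_rank}: because all ranks are non-negative, $\krs(\omega') = \min\{\kappa(\omega) \mid \omega \sat \omega'\} = 0$ holds if and only if there is some $\omega \in \Omega_\Sigma$ with $\omega \sat \omega'$ and $\kappa(\omega) = 0$, that is, if and only if $\omega'$ is the reduction of some rank-$0$ interpretation of $\kappa$. Unfolding the definition of reduction, this is precisely the condition $\omega' \in (\lb \kappa \rb_\Sigma)_{|\Sigma'}$.

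It then remains to observe that satisfaction of a $\Sigma'$-formula is invariant under reduction: since $\phi \in \mathcal{L}_{\Sigma'}$ mentions only elements of $\Sigma'$, its truth value under any $\omega \in \Omega_\Sigma$ depends solely on the literals $\omega$ assigns to $\Sigma'$, so $\omega \sat \phi$ iff $\omega_{|\Sigma'} \sat \phi$. Hence $\lb \kappa \rb_\Sigma \sat \phi$ holds iff every reduction $\omega_{|\Sigma'}$ of a model of $\kappa$ satisfies $\phi$, which by the previous step coincides with $(\lb \kappa \rb_\Sigma)_{|\Sigma'} = \lb \krs \rb_{\Sigma'} \sat \phi$; chaining these equivalences closes the argument. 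I expect the only genuinely delicate point to be the model-set identity $\lb \krs \rb_{\Sigma'} = (\lb \kappa \rb_\Sigma)_{|\Sigma'}$, and in particular the care needed to justify that taking the minimum preserves rank $0$ in both directions. The remaining steps are routine unfoldings of the definitions of belief, of satisfaction, and of the restricted language $\mathcal{L}_{\Sigma'}$.
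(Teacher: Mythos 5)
Your proof is correct, but it takes a genuinely different route from the paper: the paper does not prove \Cref{lem:marginalization_equiv} at all, instead deferring to \cite{beierle2019towards} with the remark that the relation holds there even for conditional beliefs. Your argument is a self-contained semantic proof whose pivot, the identity $\lb \krs \rb_{\Sigma'} = (\lb \kappa \rb_\Sigma)_{|\Sigma'}$, is precisely the statement the paper establishes only afterwards as \Cref{prop:ocf_models_marginalization} --- and your justification of it (the minimum over the non-empty set of non-negative ranks of the extensions of $\omega'$ is $0$ iff some extension has rank $0$) coincides with the paper's proof of that proposition. In effect you invert the paper's dependency order: the paper takes the lemma as given, derives \Cref{prop:forgetting_belief_1} from it, and proves \Cref{prop:ocf_models_marginalization} independently by direct computation, whereas you prove the model-set identity first and then obtain the lemma by combining it with the coincidence property that a $\Sigma'$-formula's truth value under $\omega \in \Omega_\Sigma$ depends only on $\omega_{|\Sigma'}$. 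This is sound (there is no circularity, since the paper's proof of \Cref{prop:ocf_models_marginalization} uses only \Cref{def:marginalization_rank}), and what your route buys is a proof that keeps the paper self-contained rather than resting on an external reference; the one ingredient you use that the paper leaves implicit is the coincidence lemma for sub-signature formulas, which you correctly identify and which is standard for propositional logic.
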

Similarly to Delgrande's forgetting, marginalisation reduces beliefs to a subsignature.
Note that \Cref{lem:marginalization_equiv} directly follows from \cite{beierle2019towards}, where they already stated that this relations generally holds for conditional beliefs. Furthermore, \Cref{lem:marginalization_equiv} allows us to express the posterior beliefs analogously to Delgrande's forgetting definition (\Cref{prop:forgetting_belief_1}).

\begin{proposition}\label{prop:forgetting_belief_1}
	Let $\kappa$ be an OCF over signature $\Sigma$ and $\Sigma' \subseteq \Sigma$ a reduced signature.
	\begin{equation*}
	Bel(\kappa_{|\Sigma'}) = Bel(\kappa) \cap \mathcal{L}_{\Sigma'}
	\end{equation*}
\end{proposition}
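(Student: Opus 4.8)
The plan is to derive the equality directly from \Cref{lem:marginalization_equiv}, treating it as a purely set-theoretic identity obtained by unwinding the definition of $Bel$. The key observation is that, by definition, $Bel(\kappa_{|\Sigma'})$ is a subset of $\mathcal{L}_{\Sigma'}$, so both sides of the claimed equation consist of formulas drawn from the reduced language $\mathcal{L}_{\Sigma'}$; it therefore suffices to show that an arbitrary $\phi \in \mathcal{L}_{\Sigma'}$ belongs to the left-hand side exactly when it belongs to the right-hand side.

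First I would fix an arbitrary $\phi \in \mathcal{L}_{\Sigma'}$ and recall that membership in a belief set coincides with being entailed by the corresponding OCF: since $Bel(\cdot)$ is deductively closed, $\phi \in Bel(\kappa_{|\Sigma'})$ iff $\kappa_{|\Sigma'} \sat \phi$, and likewise $\phi \in Bel(\kappa)$ iff $\kappa \sat \phi$ (here $\kappa \sat \phi$ abbreviates $Bel_\Sigma(\kappa) \sat \phi$, as fixed in the preliminaries). Then I would invoke \Cref{lem:marginalization_equiv}, which applies precisely because $\phi \in \mathcal{L}_{\Sigma'}$, to obtain $\kappa_{|\Sigma'} \sat \phi \Leftrightarrow \kappa \sat \phi$. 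Chaining these equivalences yields $\phi \in Bel(\kappa_{|\Sigma'}) \Leftrightarrow \phi \in Bel(\kappa)$.

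Finally I would close the argument by combining this with the language restriction. For the inclusion from left to right, any $\phi \in Bel(\kappa_{|\Sigma'})$ lies in $\mathcal{L}_{\Sigma'}$ and, by the equivalence above, in $Bel(\kappa)$, hence in $Bel(\kappa) \cap \mathcal{L}_{\Sigma'}$. For the reverse inclusion, any $\phi \in Bel(\kappa) \cap \mathcal{L}_{\Sigma'}$ is in particular a formula of $\mathcal{L}_{\Sigma'}$, so the equivalence applies and gives $\phi \in Bel(\kappa_{|\Sigma'})$. The two inclusions establish the equality.

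I do not expect a genuine obstacle here: the statement is essentially a repackaging of \Cref{lem:marginalization_equiv} as an identity of belief sets. The only point requiring slight care is to make explicit that \Cref{lem:marginalization_equiv} is stated only for formulas over the reduced signature $\Sigma'$, which is exactly why the intersection with $\mathcal{L}_{\Sigma'}$ appears on the right-hand side; formulas of $\mathcal{L}_\Sigma$ mentioning elements of $\Sigma \setminus \Sigma'$ are simply never candidates for membership in $Bel(\kappa_{|\Sigma'})$, so they must be filtered out of $Bel(\kappa)$ by the intersection.
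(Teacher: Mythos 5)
Your proof is correct and follows essentially the same route as the paper's: both arguments reduce the claim to \Cref{lem:marginalization_equiv} together with the observation that $Bel(\kappa_{|\Sigma'}) \subseteq \mathcal{L}_{\Sigma'}$, the only difference being that you spell out the two inclusions elementwise where the paper writes the same chain of identities at the level of sets.
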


\begin{proof}[Proof of \Cref{prop:forgetting_belief_1}]
Due to Lemma \ref{lem:marginalization_equiv}, we have $	Bel(\kappa) \cap \mathcal{L}_{\Sigma'} = Bel(\kappa_{|\Sigma'}) \cap \mathcal{L}_{\Sigma'}
	= Bel(\kappa_{|\Sigma'})$ because $(Bel(\kappa_{|\Sigma'}) \subseteq \mathcal{L}_{\Sigma'})$.
\end{proof}
Thereby, \Cref{prop:forgetting_belief_1} also corresponds to the second notion of focussing, due to the intersection with reduced language $\mathcal{L}_{\Sigma'}$.
The above-stated relations of the prior and posterior beliefs further imply that the models of the posterior beliefs are equal to the those of the prior when reducing them to $\Sigma'$ (\Cref{prop:ocf_models_marginalization}). This rather technical property allows us to freely switch between the models of the marginalised and the prior OCF, which will be useful in later proofs. 

\begin{proposition}\label{prop:ocf_models_marginalization}
	Let $\kappa$ be an OCF over signature $\Sigma$ and $\Sigma' \subseteq \Sigma$ a subsignature.
	Then $\lb \krs \rb = \lb \kappa \rb_{|\Sigma'}$ holds.
\end{proposition}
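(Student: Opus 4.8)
The plan is to unfold both sides down to the level of individual interpretations and their ranks, and then verify the two inclusions directly from the definition of marginalisation (\Cref{def:marginalization_rank}) together with the fact that the models of an OCF are exactly its rank-zero interpretations. Concretely, I will use $\lb \kappa \rb = \{\omega \in \Omega_\Sigma \mid \kappa(\omega) = 0\}$ and $\lb \krs \rb = \{\omega' \in \Omega_{\Sigma'} \mid \krs(\omega') = 0\}$. This rank-zero description is justified by the normalisation condition $\kappa^{-1}(0) \neq \emptyset$, which guarantees that the minimal rank attained by an OCF is $0$, so that its most plausible interpretations are precisely those of rank $0$.

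Before the inclusions I would record one preliminary observation, namely that $\krs$ is again a genuine OCF, i.e.\ it attains rank $0$: picking any $\omega \in \lb \kappa \rb$ and setting $\omega' = \omega_{|\Sigma'}$ gives $\omega \sat \omega'$, hence $\krs(\omega') \le \kappa(\omega) = 0$, and non-negativity of ranks forces equality. This legitimises the rank-zero description of $\lb \krs \rb$ and, as it happens, already delivers the backward inclusion in general form.

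For $\lb \krs \rb \subseteq \lb \kappa \rb_{|\Sigma'}$, I would take $\omega' \in \lb \krs \rb$, so $\krs(\omega') = 0$. By \Cref{def:marginalization_rank} this means $0 = \min\{\kappa(\omega) \mid \omega \in \Omega_\Sigma,\ \omega \sat \omega'\}$; since $\Omega_\Sigma$ is finite the minimum is attained, so there is $\omega$ with $\omega \sat \omega'$ and $\kappa(\omega) = 0$. Then $\omega \in \lb \kappa \rb$ and $\omega \sat_\Sigma \omega'$ (equivalently $\omega_{|\Sigma'} = \omega'$), whence $\omega' \in \lb \kappa \rb_{|\Sigma'}$ by \Cref{def:reduct_extension}. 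For the reverse inclusion $\lb \kappa \rb_{|\Sigma'} \subseteq \lb \krs \rb$, I would take $\omega' \in \lb \kappa \rb_{|\Sigma'}$; by \Cref{def:reduct_extension} there is $\omega \in \lb \kappa \rb$ with $\omega \sat_\Sigma \omega'$, i.e.\ $\kappa(\omega) = 0$ and $\omega \sat \omega'$. As $\omega$ is one of the interpretations over which the marginalisation takes its minimum, $\krs(\omega') \le \kappa(\omega) = 0$, and non-negativity again yields $\krs(\omega') = 0$, so $\omega' \in \lb \krs \rb$.

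I do not expect a genuine obstacle here, as the argument is essentially a definition chase. The only points requiring care are the bookkeeping between the two readings of ``$\omega$ is more specific than $\omega'$'' — the conjunction-of-literals notation $\omega \sat \omega'$ appearing in \Cref{def:marginalization_rank} versus the relation $\omega \sat_\Sigma \omega'$ (i.e.\ $\omega_{|\Sigma'} = \omega'$) used in the reduction of \Cref{def:reduct_extension} — and the facts that the relevant minimum is attained and that the minimal rank of any OCF is $0$. An alternative route through \Cref{prop:forgetting_belief_1}, using $\lb \kappa \rb = \lb Bel(\kappa) \rb$ together with \Cref{th:delgrande_models_signature}, is available but strictly more roundabout than the direct rank-level argument sketched above, so I would prefer the latter.
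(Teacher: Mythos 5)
Your proposal is correct and follows essentially the same route as the paper: both unfold $\lb \krs \rb$ via the rank-zero characterisation and reduce the claim to the observation that the minimum in \Cref{def:marginalization_rank} equals $0$ iff some $\omega \sat \omega'$ has $\kappa(\omega)=0$; the paper writes this as a single chain of set equalities while you split it into two inclusions and additionally note that $\krs$ attains rank $0$, which is a harmless extra check.
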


\begin{proof}[Proof of \Cref{prop:ocf_models_marginalization}]
 By definition,
 \begin{equation*}
 	\lb \krs \rb 	= \{ \omega' \in \Omega_{\Sigma'} \mid \krs(\omega') = 0 \}, 
 \end{equation*}
 so applying \Cref{def:marginalization_rank} yields
 \begin{align*}
 	& \lb \krs \rb \\
 	=& \{ \omega' \in \Omega_{\Sigma'} \mid \min \{ \kappa(\omega) \mid \omega \in \Omega_{\Sigma} \twith \omega \sat \omega' \} = 0 \},
 \end{align*}
  which is the same as
  \begin{align*}
  	 & \{ \omega' \in \Omega_{\Sigma'} \mid \exists \omega \in \Omega_{\Sigma} \twith  \omega \sat \omega' \tand \kappa(\omega) = 0 \}\\
  	 = & \{ \omega' \in \Omega_{\Sigma'} \mid \exists \omega \in \Omega_{\Sigma} \twith  \omega \sat \omega' \tand \omega \in \lb \kappa \rb \} \\
  	 = & \{ \omega' \in \Omega_{\Sigma'} \mid \exists \omega \in \lb \kappa \rb \twith \omega \sat \omega' \} = \lb \kappa \rb_{|\Sigma'}.
  \end{align*}
\end{proof}

Similar to Delgrande's idea of forgetting in the original language, we might be interested in arguing about the original signature after focussing, e.g. for reasons of comparability. Thus, we define the concept of lifting an OCF in \Cref{def:marginalization_rank_orig} below.
\begin{definition}\label{def:marginalization_rank_orig}
	Let $\kappa'$ be an OCF over signature $\Sigma' \subseteq \Sigma$. A \emph{lifting} of $\kappa'$ to $\Sigma$, denoted by $\kappa'_{\uparrow \Sigma}$, is uniquely defined by $\kappa'_{\uparrow \Sigma}(\omega) = \kappa'(\omega_{|\Sigma'})$ for all $\omega \in \Omega_{\Sigma}$.
\end{definition}
By means of lifting an OCF $\kappa'$ over signature $\Sigma'$ to a signature $\Sigma$ with $\Sigma' \subseteq \Sigma$, we (re-)introduce new signature elements to $\kappa'$ in a way that $\kappa'_{\uparrow \Sigma}$ acts invariantly towards them. This is guaranteed by the fact that all interpretations $\omega \in \Omega_{\Sigma}$ that only differ in the truth value they assign to the new signature elements $\Sigma \setminus \Sigma'$ are assigned to the same rank.
Analogously to \Cref{prop:ocf_models_marginalization}, we show in \Cref{prop:ocf_models_lifting} that the models of a lifted OCF are equal to the prior models when expanded to the super-signature.

\begin{proposition}\label{prop:ocf_models_lifting}
	Let $\kappa'$ be an OCF over signature $\Sigma' \subseteq \Sigma$.
	Then the models of the lifted $\kappa'$ are the expanded models of $\kappa'$, i.e., $\lb \kappa'_{\uparrow \Sigma} \rb = \lb \kappa' \rb_{\uparrow \Sigma}$.
\end{proposition}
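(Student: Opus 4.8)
The plan is to prove this exactly as the companion result \Cref{prop:ocf_models_marginalization} was proved: by unwinding the definition of the model set $\lb \cdot \rb$ on both sides into set-builder form and showing that the two expressions coincide. Recall that the models of any OCF are precisely its rank-$0$ interpretations, so I would begin from
\[
\lb \kappa'_{\uparrow \Sigma} \rb = \{ \omega \in \Omega_\Sigma \mid \kappa'_{\uparrow \Sigma}(\omega) = 0 \}.
\]

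First I would substitute the defining equation of the lifting from \Cref{def:marginalization_rank_orig}, namely $\kappa'_{\uparrow \Sigma}(\omega) = \kappa'(\omega_{|\Sigma'})$, turning the condition $\kappa'_{\uparrow \Sigma}(\omega) = 0$ into $\kappa'(\omega_{|\Sigma'}) = 0$. Next I would observe that $\kappa'(\omega_{|\Sigma'}) = 0$ is exactly the statement $\omega_{|\Sigma'} \in \lb \kappa' \rb$. The third step is to rewrite this membership condition in existential form (there is $\omega' \in \lb \kappa' \rb$ with $\omega_{|\Sigma'} = \omega'$) and then to invoke the equivalence from \Cref{def:reduct_extension} that $\omega_{|\Sigma'} = \omega'$ holds iff $\omega \models_\Sigma \omega'$. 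Finally I would recognise the resulting set $\{ \omega \in \Omega_\Sigma \mid \exists\, \omega' \in \lb \kappa' \rb \twith \omega \models_\Sigma \omega' \}$ as precisely the union $\bigcup_{\omega' \in \lb \kappa' \rb} \omega'_{\uparrow \Sigma}$, which by \Cref{def:reduct_extension} is the expansion $\lb \kappa' \rb_{\uparrow \Sigma}$, completing the chain.

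This argument is essentially a direct computation, so I do not expect a genuine obstacle; the only points requiring care are bookkeeping ones. The first is to apply the translation between $\omega_{|\Sigma'} = \omega'$ and $\omega \models_\Sigma \omega'$ in the correct direction, as fixed by \Cref{def:reduct_extension}. The second is to present the expansion as the union of the fibres $\omega'_{\uparrow \Sigma} = \{ \omega \in \Omega_\Sigma \mid \omega \models_\Sigma \omega' \}$, so that the final set-builder form matches the definition of $\lb \kappa' \rb_{\uparrow \Sigma}$ rather than that of a reduction. Implicitly, one should also note that $\kappa'_{\uparrow \Sigma}$ is a well-defined OCF: since $\kappa'$ is an OCF we have $\lb \kappa' \rb \neq \emptyset$, and lifting preserves at least one rank-$0$ interpretation, so both sides of the claimed equality are nonempty. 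With these checks in place the equalities chain together immediately and the proposition follows.
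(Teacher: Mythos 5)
Your proposal is correct and follows essentially the same chain of equalities as the paper's proof, merely traversed in the opposite direction (from $\lb \kappa'_{\uparrow \Sigma} \rb$ to $\lb \kappa' \rb_{\uparrow \Sigma}$ rather than the reverse), using the same two key ingredients: the definition $\kappa'_{\uparrow \Sigma}(\omega) = \kappa'(\omega_{|\Sigma'})$ and the equivalence $\omega \sat \omega' \Leftrightarrow \omega_{|\Sigma'} = \omega'$. The added remark on well-definedness is a harmless extra check not present in the paper.
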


\begin{proof}[Proof of \Cref{prop:ocf_models_lifting}]
By definition,
\begin{equation*}
	\lb \kappa' \rb_{\uparrow \Sigma} = \underset{\omega' \in \lb \kappa' \rb }{\bigcup} \{ \omega \in \Omega_\Sigma \mid \omega \sat \omega' \},
\end{equation*}
and hence
\begin{align*}
	\lb \kappa' \rb_{\uparrow \Sigma} &= \{ \omega \in \Omega_\Sigma \mid  \exists \omega' \in \lb \kappa' \rb_{\Sigma'} \twith \omega \sat \omega' \} \\
	&= \{ \omega \in \Omega_{\Sigma} \mid  \exists \omega' \in \lb \kappa' \rb_{\Sigma'} \twith \omega_{|\Sigma'} \equiv \omega' \}, 
\end{align*}
due to $\omega \sat \omega' \Leftrightarrow \omega_{|\Sigma'} = \omega'$ (\Cref{def:reduct_extension}). 
Since we know that if there is an interpretation $\omega' \in \lb \kappa' \rb_{\Sigma'}$ that is equivalent to $\omega_{|\Sigma'}$, then $\omega_{|\Sigma'}$ is included in $\lb \kappa' \rb_{\Sigma'}$ as well, and vice-versa, this last set is the same as 
\begin{align*}
	& \{ \omega \in \Omega_{\Sigma} \mid \omega_{|\Sigma'} \in \lb \kappa' \rb_{\Sigma'} \} = \{ \omega \in \Omega_{\Sigma} \mid \kappa'(\omega_{|\Sigma'}) = 0  \} \\ 
	= & \{ \omega \in \Omega_{\Sigma} \mid \kappa'_{\uparrow \Sigma}(\omega) = 0  \} = \lb \kappa'_{\uparrow \Sigma} \rb,
\end{align*}
again by definition. 
\end{proof}

Therefore, we also know that the beliefs after lifting are equivalent to the prior with respect to $\Sigma$, which can also be denoted as the deductive closure of the prior beliefs with respect to $\Sigma$ (\Cref{prop:lifting_belief}).

\begin{proposition}\label{prop:lifting_belief}
	Let $\kappa'$ be an OCF over signature $\Sigma' \subseteq \Sigma$ and $\kappa'_{\uparrow \Sigma}$ be a lifting of $\kappa'$ to $\Sigma$, then the beliefs of $\kappa'_{\uparrow \Sigma}$ are given by
	$Bel(\kappa'_{\uparrow \Sigma}) = Cn_\Sigma(Bel(\kappa'))$.
\end{proposition}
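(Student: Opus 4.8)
The plan is to reduce both sides of the claimed identity to the theory of one and the same set of $\Sigma$-interpretations, namely the expansion $\lb \kappa' \rb_{\uparrow \Sigma}$, and then to invoke \Cref{prop:ocf_models_lifting}. First I would unfold the left-hand side using the fact that the beliefs of an OCF are exactly the theory of its models: $Bel(\kappa'_{\uparrow \Sigma}) = Th(\lb \kappa'_{\uparrow \Sigma} \rb)$. By \Cref{prop:ocf_models_lifting} the models of the lifted OCF are precisely the expanded models, $\lb \kappa'_{\uparrow \Sigma} \rb = \lb \kappa' \rb_{\uparrow \Sigma}$, so that $Bel(\kappa'_{\uparrow \Sigma}) = Th(\lb \kappa' \rb_{\uparrow \Sigma})$.

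Next I would treat the right-hand side. Using the preliminaries' observation that deductive closure coincides with the theory of the model set, i.e. $Cn_\Sigma(\Gamma) = Th_\Sigma(\lb \Gamma \rb_\Sigma)$ for any $\Gamma \subseteq \mathcal{L}_\Sigma$ (which holds for sets of formulas, not only single formulas), I obtain $Cn_\Sigma(Bel(\kappa')) = Th_\Sigma(\lb Bel(\kappa') \rb_\Sigma)$. It then remains to show $\lb Bel(\kappa') \rb_\Sigma = \lb \kappa' \rb_{\uparrow \Sigma}$. For this I would use two facts: (i) since $Bel(\kappa') = Th_{\Sigma'}(\lb \kappa' \rb_{\Sigma'})$ and, over a finite signature, every set of interpretations is definable (each interpretation being a conjunction of literals), the $\Sigma'$-models of $Bel(\kappa')$ are exactly $\lb \kappa' \rb_{\Sigma'}$; and (ii) a set of formulas over the sub-signature $\Sigma'$ places no constraint on the elements of $\Sigma \setminus \Sigma'$, so its $\Sigma$-models are precisely the expansion of its $\Sigma'$-models, giving $\lb Bel(\kappa') \rb_\Sigma = (\lb \kappa' \rb_{\Sigma'})_{\uparrow \Sigma} = \lb \kappa' \rb_{\uparrow \Sigma}$. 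Combining the two computations yields $Cn_\Sigma(Bel(\kappa')) = Th(\lb \kappa' \rb_{\uparrow \Sigma}) = Bel(\kappa'_{\uparrow \Sigma})$, as desired.

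The main obstacle I anticipate is the bridging step $\lb Bel(\kappa') \rb_\Sigma = (\lb \kappa' \rb_{\Sigma'})_{\uparrow \Sigma}$. It quietly combines two things that must be spelled out to avoid a gap: that taking the theory of a model set and then re-taking its models returns the original set (model-definability, valid because $\Sigma$ is finite), and that interpreting a $\Sigma'$-theory over the larger signature $\Sigma$ corresponds exactly to expanding its models. Both are routine in the finite propositional setting, but the passage between the $\Sigma'$- and $\Sigma$-model spaces is precisely where care is needed, so I would make that transition explicit rather than leaving it implicit.
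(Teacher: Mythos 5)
Your proof is correct, and it follows the same overall strategy as the paper's: both sides are reduced to the theory of the expanded model set $\lb \kappa' \rb_{\uparrow \Sigma}$, with \Cref{prop:ocf_models_lifting} doing the work on the left-hand side. The central bridge is realized differently, though. The paper stays on the syntax side: it writes $Th(\lb \kappa'_{\uparrow\Sigma} \rb)$ as $Cn_\Sigma$ of the disjunction of its models (each model read as a conjunction of literals), regroups that disjunction along the expansion $\bigcup_{\omega' \in \lb\kappa'\rb} \omega'_{\uparrow\Sigma}$, collapses each inner disjunction $\bigvee_{\omega \in \omega'_{\uparrow\Sigma}} \omega$ back to $\omega'$, and inserts $Cn_{\Sigma'}$ to recognize $Cn_\Sigma(Bel(\kappa'))$. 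You instead stay on the model side: you unfold $Cn_\Sigma(Bel(\kappa'))$ as $Th_\Sigma(\lb Bel(\kappa')\rb_\Sigma)$ and prove $\lb Bel(\kappa')\rb_\Sigma = (\lb\kappa'\rb_{\Sigma'})_{\uparrow\Sigma}$ via finite-signature model definability together with the fact that a $\Sigma'$-theory places no constraint on $\Sigma \setminus \Sigma'$. The two auxiliary facts you rightly flag as needing explicit justification are precisely the model-theoretic duals of the paper's syntactic manipulations (the collapse of the inner disjunction, and the insertion of $Cn_{\Sigma'}$), so neither route is materially shorter; yours is arguably cleaner in that it never treats interpretations as formulas, at the price of invoking definability explicitly. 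No gap either way.
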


\begin{proof}[Proof of \Cref{prop:lifting_belief}]
In a straightforward way, we obtain from \Cref{prop:ocf_models_lifting}
\begin{align*}
	&~Bel(\kappa'_{\uparrow \Sigma}) =  Th(\lb \kappa'_{\uparrow \Sigma} \rb) \\
	= &~ Cn_\Sigma(\underset{\omega \in \lb \kappa'_{\uparrow \Sigma} \rb}{\bigvee} \omega) = Cn_\Sigma(\underset{\omega \in \lb \kappa' \rb_{\uparrow \Sigma}}{\bigvee} \omega) \\
	= &~ Cn_\Sigma(\underset{\omega \in \underset{\omega' \in \lb \kappa' \rb}{\bigcup} \omega'_{\uparrow \Sigma}}{\bigvee} \omega) = Cn_\Sigma(\underset{\omega' \in \lb \kappa' \rb}{\bigvee} (\underset{\omega \in \omega'_{\uparrow \Sigma}}{\bigvee} \omega)) \\ 
	= &~ Cn_\Sigma(\underset{\omega' \in \lb \kappa' \rb}{\bigvee} \omega') = Cn_\Sigma(Cn_{\Sigma'}(\underset{\omega' \in \lb \kappa' \rb}{\bigvee} \omega')) \\
	= &~ Cn_\Sigma(Th(\lb \kappa' \rb)) = Cn_\Sigma(Bel(\kappa')).
\end{align*}
\end{proof}

\Cref{prop:lifting_belief} clearly shows that the beliefs of a marginalised OCF relate to those after lifting it to the original signature again in the same way Delgrande's forgetting in the original language relates to forgetting in the reduced language (see \Cref{def:forget_delgrande_orig_signature}).

Finally, we can show that the marginalisation generalizes Delgrande's forgetting definition to epistemic states, since both forgetting approaches result in equivalent posterior beliefs when applied to the same prior knowledge (\Cref{th:marginalization_delgrande_equiv}).
\begin{theorem}\label{th:marginalization_delgrande_equiv}
	Let $\Gamma \subseteq \mathcal{L}_\Sigma$ be a set of formulas and $\kappa$ an OCF over signature $\Sigma$ with $Bel(\kappa) \equiv \Gamma$, then
	\begin{equation*}
	\mathcal{F}(\Gamma, P) = Bel(\kappa_{|(\Sigma \setminus P)})
	\end{equation*}
	holds for each signature $P$.
\end{theorem}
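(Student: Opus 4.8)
The plan is to prove the identity by a short chain of equalities connecting Delgrande's forgetting operator with the marginalised beliefs via \Cref{prop:forgetting_belief_1}. Since all three of $\mathcal{F}(\Gamma,P)$, $Cn_\Sigma(\Gamma)\cap\mathcal{L}_{\Sigma\setminus P}$, and $Bel(\kappa_{|(\Sigma\setminus P)})$ are already syntactic objects (sets of formulas), no model-level reasoning is strictly necessary: everything can be carried out at the level of deductively closed sets, so the proof reduces to recognising that the hypothesis $Bel(\kappa)\equiv\Gamma$ lets us identify the relevant closures.

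First I would unfold the left-hand side by \Cref{def:forget_delgrande}, giving $\mathcal{F}(\Gamma,P)=Cn_\Sigma(\Gamma)\cap\mathcal{L}_{\Sigma\setminus P}$. Next I would use the hypothesis $Bel(\kappa)\equiv\Gamma$ to replace $Cn_\Sigma(\Gamma)$ by $Bel(\kappa)$. The key observation here is that $Bel(\kappa)=Th(\lb\kappa\rb)$ is by definition deductively closed, so $Bel(\kappa)=Cn_\Sigma(Bel(\kappa))$; combining this with $Bel(\kappa)\equiv\Gamma$ (equivalent sets have the same deductive closure) yields $Bel(\kappa)=Cn_\Sigma(\Gamma)$. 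Substituting gives $\mathcal{F}(\Gamma,P)=Bel(\kappa)\cap\mathcal{L}_{\Sigma\setminus P}$. Finally, instantiating \Cref{prop:forgetting_belief_1} with the subsignature $\Sigma'=\Sigma\setminus P$ gives exactly $Bel(\kappa)\cap\mathcal{L}_{\Sigma\setminus P}=Bel(\kappa_{|(\Sigma\setminus P)})$, which closes the chain.

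The only genuinely load-bearing step is the middle one, turning the logical equivalence $Bel(\kappa)\equiv\Gamma$ into the set equality $Bel(\kappa)=Cn_\Sigma(\Gamma)$. I would make this explicit by noting that two sets of formulas are equivalent precisely when they have the same models, hence the same theory, and that the deductive closure of a set coincides with the theory of its models; applying this to the already-closed set $Bel(\kappa)$ removes any ambiguity. Once this is settled, the remaining steps are purely definitional unfoldings plus a single appeal to \Cref{prop:forgetting_belief_1}, so I do not expect any serious obstacle.
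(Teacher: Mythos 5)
Your proposal is correct and follows essentially the same route as the paper's own proof: both reduce the claim to \Cref{prop:forgetting_belief_1} plus the identification $Bel(\kappa)=Cn_\Sigma(\Gamma)$ obtained from $Bel(\kappa)\equiv\Gamma$ and deductive closedness of $Bel(\kappa)$. You merely run the chain in the opposite direction and spell out the equivalence-to-equality step that the paper leaves implicit.
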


\begin{proof}[Proof of \Cref{th:marginalization_delgrande_equiv}]
Due to \Cref{prop:forgetting_belief_1}, we have $Bel(\kappa_{|(\Sigma \setminus P)}) = Bel(\kappa) \cap \mathcal{L}_{\Sigma \setminus P} $. 
Since $Bel(\kappa) \equiv \Gamma$, this is the same as $Cn_\Sigma(\Gamma) \cap \mathcal{L}_{\Sigma \setminus P} = \mathcal{F}(\Gamma, P)$, by definition. 
\end{proof}

The equivalence of the prior knowledge for both approaches can be stated as $Bel(\kappa) \equiv \Gamma$, which means that the set of formulas Delgrande's forgetting is applied to must be equivalent to the prior beliefs $Bel(\kappa)$. Furthermore, note that Delgrande's forgetting definition argues about the elements that should be forgotten, while the marginalisation argues about the remaining subsignature.

In \Cref{ex:marginalisation_lifting_example} below, we illustrate the marginalisation as well as a subsequently performed lifting of an OCF $\kappa$ over the signature $\Sigma = \{p,b,f\}$, and show how marginalisation and lifting corresponds to Delgrande's forgetting definitions. For this we refer to the example on Delgrande's forgetting (\Cref{ex:delgrande_example}).

\begin{example}\label{ex:marginalisation_lifting_example}
	In this example, we illustrate a marginalisation and a consecutively performed lifting of the OCF $\kappa$ over $\Sigma = \{p, b, f\}$ (see \Cref{ex:delgrande_example}) given in \Cref{tab:marginalisation_lifting_example}, as well as the relations to Delgrande's forgetting definitions. In the following, we want to forget the subsignature $\{p\} \subseteq \Sigma$.
	
	First of all, we want to note that the beliefs of $\kappa$ are equivalent to the knowledge base $\Gamma$ (\Cref{ex:delgrande_example}), since their corresponding models are the same:
	\begin{align*}
		& ~Bel_\Sigma(\kappa) = Th(\lb \kappa \rb_\Sigma) = Th(\{ \overline{p}\overline{b}\pt\overline{f}, pb\overline{f}, \overline{p}bf \}) \\
		= & ~Th(\lb \Gamma \rb_\Sigma) = Cn_\Sigma(\Gamma) \equiv \Gamma
	\end{align*}
	
	Marginalising $\kappa$ to $\Sigma \setminus P$ results in $\kappa_{|(\Sigma \setminus P)}$ as given in \Cref{tab:marginalisation_lifting_example}. There it can be seen that the posterior most plausible interpretation correspond to those of $\kappa$ when omitting $p$, i.e. each interpretation $\dot{p}\dot{b}\dot{f} \in \lb \kappa \rb$ is mapped to $\dot{b}\dot{f} \in \lb \kappa_{|(\Sigma \setminus \{p\})} \rb$. This exactly corresponds to the way Delgrande's forgetting in the reduced language affects the models of the given knowledge base $\Gamma$:
	\begin{align*}
		& ~\lb \kappa_{|(\Sigma \setminus \{p\})} \rb_{\Sigma \setminus P} = \lb \kappa \rb_{|(\Sigma \setminus \{p\})} = \{ \overline{p}\overline{b}\pt\overline{f}, pb\overline{f}, \overline{p}bf \}_{|(\Sigma \setminus \{p\})} \\
		=& ~\{ \overline{b}\pt\overline{f}, b\overline{f}, bf \} = \lb \mathcal{F}(\Gamma, \{p\}) \rb_{\Sigma \setminus \{p\}}
	\end{align*}
	In conclusion, we that know the posterior beliefs of the marginalisation and the result of Delgrande's forgetting must be equal:
	\begin{align*}
		& ~Bel(\kappa_{|(\Sigma \setminus \{p\})}) = Th(\lb \kappa_{|(\Sigma \setminus \{p\})} \rb_{\Sigma \setminus \{p\}}) \\
		= & ~ Th(\{ \overline{b}\pt\overline{f}, b\overline{f}, bf \})
		= Th(\lb \mathcal{F}(\Gamma, \{p\}) \rb_{\Sigma \setminus P}) = \mathcal{F}(\Gamma, \{p\})
	\end{align*}
	
	When we lift the marginalised OCF $\kappa_{|(\Sigma \setminus \{p\})}$ back to the original signature $\Sigma$, the posterior most plausible interpretations can be obtained by mapping each interpretation $\dot{b}\dot{f} \in \lb \kappa_{|(\Sigma \setminus \{p\})} \rb_{\Sigma \setminus \{p\}}$ to $\{ p\dot{b}\dot{f}, \overline{p}\dot{b}\dot{f} \} \subseteq \lb (\kappa_{|(\Sigma \setminus \{p\})})_{\uparrow \Sigma} \rb_\Sigma$ (see \Cref{tab:marginalisation_lifting_example}). Just as for the marginalisation, this exactly corresponds to the way Delgrande's forgetting in the original language affects the prior models of the knowledge base $\Gamma$:
	\begin{align*}
		& ~\lb (\kappa_{|(\Sigma \setminus \{p\}) })_{\uparrow \Sigma} \rb_\Sigma = \{ \overline{b}\pt\overline{f}, b\overline{f}, bf \}_{\uparrow \Sigma} \\
		=& ~ \{ \overline{p}\overline{b}\pt\overline{f}, p\overline{b}\pt\overline{f}, pb\overline{f}, \overline{p}b\overline{f}, \overline{p}bf, pbf \} = \lb \mathcal{F}_O(\Gamma, \{p\}) \rb_\Sigma
	\end{align*}
	Therefore, the result of Delgrande's forgetting in the original language is equal to the beliefs after marginalising and lifting $\kappa$:
	\begin{align*}
		& ~Bel_\Sigma((\kappa_{|(\Sigma \setminus \{p\}) })_{\uparrow \Sigma}) = Th(\lb (\kappa_{|(\Sigma \setminus \{p\}) })_{\uparrow \Sigma} \rb_\Sigma) \\
		=& ~ Th(\{ \overline{p}\overline{b}\pt\overline{f}, p\overline{b}\pt\overline{f}, pb\overline{f}, \overline{p}b\overline{f}, \overline{p}bf, pbf \}) \\
		=& ~ Th(\lb \mathcal{F}_O(\Gamma, \{p\}) \rb_\Sigma) = \mathcal{F}_O(\Gamma, \{p\})
	\end{align*}
	
	\begin{table}
		\centering
		\begin{tabular}{| c | c | c | c |}
			\hline
			& $\kappa$ & $\kappa_{|(\Sigma \setminus \{p\})}$ & $(\kappa_{|(\Sigma \setminus \{p\})})_{\uparrow \Sigma}$ \\ \hline\hline
			2 & $pbf$, $\overline{p}b\overline{f}$ & - & - \\ \hline
			1 & $p\overline{b}\pt\overline{f}$, $\overline{p}\overline{b}f$, $p\overline{b}f$ & $\overline{b}f$ & $p\overline{b}f$, $\overline{p}\overline{b}f$ \\ \hline
			\multirow{2}{*}{0} & \multirow{2}{*}{$\overline{p}\overline{b}\pt\overline{f}$, $pb\overline{f}$, $\overline{p}bf$} & \multirow{2}{*}{$\overline{b}\pt\overline{f}$, $b\overline{f}$, $bf$} & $\overline{p}\overline{b}\pt\overline{f}$, $p\overline{b}\pt\overline{f}$, $pb\overline{f}$,\\
			&&& $\overline{p}b\overline{f}$, $\overline{p}bf$, $pbf$ \\ \hline
		\end{tabular}
		\caption{OCFs $\kappa$ over signature $\Sigma = \{p,b,f\}$, as well as its marginalisation $\kappa_{|(\Sigma \setminus \{p\}) }$ and the corresponding lifting $(\kappa_{|(\Sigma \setminus \{p\}) })_{\uparrow \Sigma}$.}
		\label{tab:marginalisation_lifting_example}
	\end{table}
\end{example}

From the equivalence stated in \Cref{th:marginalization_delgrande_equiv}, we know that all relations of the logic-specific forgetting approaches and Delgrande's general approach that can be traced back to the equivalence of the results must hold for the marginalisation as well. In the following, we exemplarily state this for Boole's atom forgetting in propositional (\Cref{def:forget_boole}), of which we know that it can also be described by means of $\mathcal{F}$ (\Cref{th:boole_equiv_delgrande}).

\begin{definition}\label{def:forget_boole}\textnormal{\cite{boole1854investigation}}
	Let $\phi \in \mathcal{L}_\Sigma$ be a formula and $\rho \in \mathcal{L}_\Sigma$ be an atom. \emph{Forgetting} $\rho$ in $\phi$ is then defined as
	\begin{equation*}
	forget(\phi, \rho) = \phi[\rho / \top] \vee \phi[\rho / \bot],
	\end{equation*}
	where $\phi[\rho / \top]$ denotes the substitution of $\rho$ by $\top$, and $\phi[\rho / \bot]$ the substitution by $\bot$.
\end{definition}

\begin{theorem}\label{th:boole_equiv_delgrande}\textnormal{\cite{delgrande2017knowledge}}
	Let $\mathcal{L}_\Sigma$ be the language in propositional logic with signature $\Sigma$ and let $\rho \in \Sigma$ be an atom.
	\begin{equation*}
		forget(\phi, \rho) \equiv \mathcal{F}(\phi, \{ \rho \})
	\end{equation*}
\end{theorem}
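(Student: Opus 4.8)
The plan is to prove the equivalence on the model level, i.e.\ to show $\lb forget(\phi, \rho) \rb_\Sigma = \lb \mathcal{F}(\phi, \{\rho\}) \rb_\Sigma$, since two formulas over the same signature are equivalent exactly when their model sets coincide. For the right-hand side I would immediately invoke \Cref{th:delgrande_models_signature}(2), which gives $\lb \mathcal{F}(\phi, \{\rho\}) \rb_\Sigma = ((\lb \phi \rb_\Sigma)_{|(\Sigma \setminus \{\rho\})})_{\uparrow \Sigma}$, so that the whole task reduces to analysing the models of Boole's operator and matching them against this reduction-then-expansion.

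For the left-hand side, the key observation is the semantics of substitution: since $\phi[\rho / \top]$ no longer mentions $\rho$, an interpretation $\omega \in \Omega_\Sigma$ satisfies it iff the $\rho$-variant $\omega[\rho \mapsto \top]$ (the interpretation agreeing with $\omega$ on $\Sigma \setminus \{\rho\}$ and assigning $\top$ to $\rho$) satisfies $\phi$, and symmetrically for $\phi[\rho/\bot]$. Hence $\lb forget(\phi, \rho)\rb_\Sigma = \lb \phi[\rho/\top]\rb_\Sigma \cup \lb \phi[\rho/\bot]\rb_\Sigma$ is precisely the set of $\omega$ for which $\omega[\rho \mapsto \top] \sat \phi$ or $\omega[\rho \mapsto \bot] \sat \phi$.

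Then I would unfold the right-hand side using \Cref{def:reduct_extension}: an interpretation $\omega$ lies in $((\lb \phi \rb_\Sigma)_{|(\Sigma \setminus \{\rho\})})_{\uparrow \Sigma}$ iff $\omega_{|(\Sigma \setminus \{\rho\})}$ is the reduction of some model of $\phi$, i.e.\ iff there is some $\tilde\omega \sat \phi$ agreeing with $\omega$ on $\Sigma \setminus \{\rho\}$. Such a $\tilde\omega$ can differ from $\omega$ only on $\rho$, so it must be one of $\omega[\rho \mapsto \top]$ or $\omega[\rho \mapsto \bot]$. Therefore $\omega$ is in the reduction-then-expansion iff $\omega[\rho \mapsto \top]\sat\phi$ or $\omega[\rho \mapsto \bot]\sat\phi$ --- exactly the characterisation of the models of $forget(\phi,\rho)$ obtained above. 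This yields equality of the two model sets and hence the claimed equivalence.

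The main obstacle I anticipate is purely the bookkeeping around substitution semantics: one has to justify cleanly that satisfaction of $\phi[\rho/\top]$ by $\omega$ is insensitive to the value $\omega$ assigns to $\rho$ and is equivalent to satisfaction of $\phi$ by the $\top$-variant, and that the clause ``some model of $\phi$ agrees with $\omega$ off $\rho$'' collapses to a disjunction over the two possible truth values of $\rho$. No deeper difficulty arises, since the heavy lifting --- relating $\mathcal{F}$ to the reduction and expansion operators --- is already supplied by \Cref{th:delgrande_models_signature}.
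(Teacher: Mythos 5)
Your proposal is correct, but note that the paper itself contains no proof of this statement: \Cref{th:boole_equiv_delgrande} is quoted verbatim from Delgrande (2017) and used as an imported result, so there is no in-paper argument to compare against. Your semantic route is the standard one and it goes through: reducing both sides to model sets over $\Sigma$, invoking \Cref{th:delgrande_models_signature}(2) for $\mathcal{F}(\phi,\{\rho\})$, and characterising $\lb forget(\phi,\rho)\rb_\Sigma$ via the substitution lemma ($\omega \sat \phi[\rho/\top]$ iff the $\rho$-variant $\omega[\rho\mapsto\top] \sat \phi$, and satisfaction of $\phi[\rho/\top]$ is insensitive to $\omega(\rho)$) are all sound steps, and the final observation that any $\tilde\omega$ agreeing with $\omega$ off $\rho$ must be one of the two $\rho$-variants correctly collapses the existential to the disjunction. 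The only point worth making explicit is that $\mathcal{F}(\phi,\{\rho\})$ is a \emph{set} of formulas in the sublanguage $\mathcal{L}_{\Sigma\setminus\{\rho\}}$ while $forget(\phi,\rho)$ is a single formula, so the claimed equivalence has to be read as equality of $\Sigma$-model sets; this is exactly what \Cref{th:delgrande_models_signature}(2) licenses, and your proof implicitly handles it correctly.
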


From \Cref{th:marginalization_delgrande_equiv} and \Cref{th:boole_equiv_delgrande}, we can directly conclude that Boole's forgetting definition can also be realized by means of a marginalisation (\Cref{cor:marginalization_boole_equiv}).

\begin{corollary}\label{cor:marginalization_boole_equiv}
	Let $\kappa$ be an OCF over signature $\Sigma$ and $\phi \in \mathcal{L}_\Sigma$ a formula with $Bel(\kappa) \equiv \phi$, then
	\begin{equation*}
	forget(\phi, \rho) \equiv Bel(\kappa_{|\Sigma \setminus \{ \rho \}})
	\end{equation*}
	holds for each atom $\rho \in \Sigma$.
\end{corollary}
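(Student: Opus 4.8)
The plan is to obtain this corollary as a direct composition of the two immediately preceding results, \Cref{th:boole_equiv_delgrande} and \Cref{th:marginalization_delgrande_equiv}, since the claim is essentially their concatenation: \Cref{th:boole_equiv_delgrande} connects Boole's syntactic atom forgetting to Delgrande's knowledge-level operator $\mathcal{F}$, while \Cref{th:marginalization_delgrande_equiv} connects $\mathcal{F}$ to the beliefs of a marginalised OCF. Chaining them through $\mathcal{F}(\phi, \{\rho\})$ as the common intermediate object should give the result with no genuine computation.

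Concretely, I would first invoke \Cref{th:boole_equiv_delgrande} with the atom $\rho \in \Sigma$, which yields $forget(\phi, \rho) \equiv \mathcal{F}(\phi, \{\rho\})$ immediately. Then I would apply \Cref{th:marginalization_delgrande_equiv} under the instantiation $\Gamma := \phi$ (treating the single formula $\phi$ as its own knowledge base) and $P := \{\rho\}$. The hypothesis required there, namely $Bel(\kappa) \equiv \Gamma$, is exactly the standing assumption $Bel(\kappa) \equiv \phi$ of the corollary, so the theorem delivers $\mathcal{F}(\phi, \{\rho\}) = Bel(\kappa_{|(\Sigma \setminus \{\rho\})})$. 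Composing the two displays gives
\[
forget(\phi, \rho) \;\equiv\; \mathcal{F}(\phi, \{\rho\}) \;=\; Bel(\kappa_{|(\Sigma \setminus \{\rho\})}),
\]
which is precisely the asserted equivalence.

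There is no substantial obstacle in this argument; it is a two-line chaining of existing theorems. The only points demanding a moment's care are bookkeeping of hypotheses: one must observe that the single formula $\phi$ legitimately plays the role both of the argument of Boole's $forget$ in \Cref{th:boole_equiv_delgrande} and of the set $\Gamma$ in \Cref{th:marginalization_delgrande_equiv} (justified by the remark that the formal basics carry over to sets of formulas and that $\phi$ is equivalent to its own deductive closure), and that the premise $Bel(\kappa) \equiv \phi$ is exactly what licenses the second step. Note also that the two results are stated with different symbols — \Cref{th:boole_equiv_delgrande} with semantic equivalence $\equiv$ and \Cref{th:marginalization_delgrande_equiv} with set equality $=$ — so the conclusion is naturally phrased up to $\equiv$, matching the statement of the corollary.
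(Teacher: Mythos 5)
Your proposal is correct and matches the paper's own (implicit) justification exactly: the text preceding the corollary derives it directly by chaining \Cref{th:boole_equiv_delgrande} with \Cref{th:marginalization_delgrande_equiv} via the common intermediate $\mathcal{F}(\phi,\{\rho\})$, just as you do. Your additional bookkeeping remarks (treating $\phi$ as $\Gamma$ and noting the $\equiv$ versus $=$ distinction) are sound and, if anything, slightly more careful than the paper.
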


\section{Postulates for Forgetting Signatures in Epistemic States}\label{sec:postulates}
In \cite{delgrande2017knowledge}, Delgrande argues that the properties \dfp{1}-\dfp{7} (\Cref{th:delgrande_postulates}) of his forgetting definition are \emph{right} and \emph{desirable} for describing the general notions of forgetting. Since we already proved that his definition can be generalised to epistemic states by means of the marginalisation, we
also present an extended and generalised form of \dfp{1}-\dfp{7}, namely \dfpessr{1}-\dfpessr{6},
and show that the marginalisation satisfies all of them. For this, let $\Psi, \Phi$ be epistemic states,
$P, P’, P_1, P_2$ signatures, and $\forgetS$ an arbitrary operator that maps an epistemic state
together with a signature to a new epistemic state:
\begin{description}
	\item[\dfpessr{1}] $Bel(\Psi) \sat Bel(\Psi \forgetS P)$
	\item[\dfpessr{2}] If $Bel(\Psi) \sat Bel(\Phi)$, then $Bel(\Psi \forgetS P) \sat Bel(\Phi \forgetS P)$
	\item[\dfpessr{3}] If $P' \subseteq P$, then $Bel((\Psi \forgetS P') \forgetS P) \equiv Bel(\Psi \forgetS P)$
	\item[\dfpessr{4}] $Bel(\Psi \forgetS (P_1 \cup P_2)) \equiv Bel(\Psi \forgetS P_1) \cap Bel(\Psi \forgetS P_2)$
	\item[\dfpessr{5}] $Bel(\Psi \forgetS (P_1 \cup P_2)) \equiv Bel((\Psi \forgetS P_1) \forgetS P_2)$
	\item[\dfpessr{6}] $Bel(\Psi \forgetS P) \equiv Bel((\Psi \forgetS P)_{\uparrow \Sigma}) \cap \mathcal{L}_{\Sigma \setminus P}$
\end{description}

For a detailed explanation of the above-stated postulates \dfpessr{1}-\dfpessr{6}, we refer to the
explanations of the postulates \dfp{1}-\dfp{7} as originally stated by Delgrande. However, there
are a few points we want to emphasise in particular. First, since the beliefs of an epistemic state
are deductively closed by definition, it is not necessary to maintain \dfp{3}. Notice that due to
omitting \dfp{3} the postulates \dfp{4}-\dfp{7} correspond to \dfpessr{3}-\dfpessr{6}.
Furthermore, we expressed the forgetting in the original signature $\mathcal{F}_O(\Gamma, P)$ in
\dfp{7} as the beliefs after forgetting $P$ and lifting the posterior epistemic state back to the
original signature. The models of $\mathcal{F}_O(\Gamma, P)$ are equal to the models of forgetting
$P$ in $\Gamma$ in the reduced signature lifted back to the original signature, i.e. $\lb
\mathcal{F}(\Gamma, P) \rb_{\uparrow \Sigma}$ (\Cref{cor:models_delgrande_forgetting_orig}). When we consider the models of
$Bel((\Psi \circ_f^{\Sigma} P)_{\uparrow \Sigma})$, i.e. $\lb \Psi \circ_f^{\Sigma} P \rb_{\uparrow
 \Sigma}$, we see that this also describes the models after forgetting $P$ lifted back to the original
signature. Therefore, \dfpessr{6} exactly matches the property originally stated by \dfp{7}.
In the following, we refer to those operators satisfying \dfpessr{1}-\dfpessr{6} as signature forgetting operators.

Next, we show in \Cref{th:dfpessr_marginalization} that the marginalisation satisfies \dfpessr{1}-\dfpessr{6}, and therefore not only yields results equivalent to those of Delgrande’s forgetting definition, but also corresponds to
the notions of forgetting stated by Delgrande by means of \dfp{1}-\dfp{7}.

Note that there exist forgetting approaches that yield results semantically equivalent to those of Delgrande's approach, but do not satisfy \dfp{1}-\dfp{7}. An example is Boole's atom forgetting (\Cref{def:forget_boole}), which violates \dfp{3}.

\begin{theorem}\label{th:dfpessr_marginalization}
	Let $\kappa$ be an OCF over signature $\Sigma$ and $P$ a signature. The marginalisation $\kappa_{|(\Sigma \setminus P)}$ to a subsignature $(\Sigma \setminus P) \subseteq \Sigma$ satisfies \dfpessr{1}-\dfpessr{6}.
\end{theorem}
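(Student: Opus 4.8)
The plan is to reduce each of the six postulates to belief-level identities that are already available, so that no new computation on ranks is needed. The two workhorses are \Cref{prop:forgetting_belief_1}, which gives $Bel(\kappa_{|\Sigma'}) = Bel(\kappa) \cap \mathcal{L}_{\Sigma'}$, and \Cref{prop:lifting_belief}, which gives $Bel(\kappa'_{\uparrow \Sigma}) = Cn_\Sigma(Bel(\kappa'))$. Since $\Psi \forgetS P$ is instantiated as the marginalisation $\kappa_{|(\Sigma \setminus P)}$, each claim in \dfpessr{1}--\dfpessr{6} becomes a statement purely about the sets $Bel(\kappa) \cap \mathcal{L}_{\Sigma \setminus P}$ and intersections of reduced languages. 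This mirrors the alignment already noted in the text: after dropping \dfp{3}, the postulates \dfpessr{1}--\dfpessr{6} correspond to \dfp{1}, \dfp{2}, \dfp{4}--\dfp{7}, so each either falls out directly or can be transferred from the matching Delgrande property via \Cref{th:marginalization_delgrande_equiv} by taking $\Gamma := Bel(\kappa)$.

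First I would dispatch \dfpessr{1}, \dfpessr{2}, and \dfpessr{4} using only \Cref{prop:forgetting_belief_1} and elementary facts about intersection. \dfpessr{1} reduces to $Bel(\kappa) \sat Bel(\kappa) \cap \mathcal{L}_{\Sigma \setminus P}$, which holds because the right-hand side is a subset of the deductively closed $Bel(\kappa)$. \dfpessr{2} reduces to the monotonicity of intersection with $\mathcal{L}_{\Sigma \setminus P}$ under $\sat$. For \dfpessr{4} I would rewrite both sides through \Cref{prop:forgetting_belief_1} and apply the language identity $\mathcal{L}_{\Sigma \setminus P_1} \cap \mathcal{L}_{\Sigma \setminus P_2} = \mathcal{L}_{\Sigma \setminus (P_1 \cup P_2)}$, making both sides equal to $Bel(\kappa) \cap \mathcal{L}_{\Sigma \setminus (P_1 \cup P_2)}$.

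The postulates involving iterated forgetting, \dfpessr{3} and \dfpessr{5}, rest on the observation that marginalisation composes at the belief level: applying \Cref{prop:forgetting_belief_1} twice gives $Bel((\kappa_{|\Sigma'})_{|\Sigma''}) = Bel(\kappa) \cap \mathcal{L}_{\Sigma'} \cap \mathcal{L}_{\Sigma''} = Bel(\kappa) \cap \mathcal{L}_{\Sigma''}$ whenever $\Sigma'' \subseteq \Sigma'$, since then $\mathcal{L}_{\Sigma''} \subseteq \mathcal{L}_{\Sigma'}$. For \dfpessr{3}, $P' \subseteq P$ gives $\Sigma \setminus P \subseteq \Sigma \setminus P'$, so $Bel((\Psi \forgetS P') \forgetS P)$ collapses to $Bel(\kappa) \cap \mathcal{L}_{\Sigma \setminus P} = Bel(\Psi \forgetS P)$. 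For \dfpessr{5}, the inner marginalisation to $\Sigma \setminus P_1$ followed by the outer one to $(\Sigma \setminus P_1) \setminus P_2 = \Sigma \setminus (P_1 \cup P_2)$ collapses to $Bel(\kappa) \cap \mathcal{L}_{\Sigma \setminus (P_1 \cup P_2)}$, matching the left-hand side. Here one must be careful that the second application of $\forgetS$ operates on the OCF $\kappa_{|(\Sigma \setminus P_1)}$ over signature $\Sigma \setminus P_1$, so that forgetting $P_2$ indeed yields the reduction to $(\Sigma \setminus P_1) \setminus P_2$.

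I expect \dfpessr{6} to be the main obstacle, since it is the only one genuinely mixing marginalisation with lifting. Here I would invoke \Cref{prop:lifting_belief} to rewrite $Bel((\kappa_{|(\Sigma \setminus P)})_{\uparrow \Sigma})$ as $Cn_\Sigma(Bel(\kappa_{|(\Sigma \setminus P)}))$, turning the right-hand side into $Cn_\Sigma(Bel(\kappa_{|(\Sigma \setminus P)})) \cap \mathcal{L}_{\Sigma \setminus P}$. What remains is to show this recovers $Bel(\kappa_{|(\Sigma \setminus P)})$ exactly, i.e. that closing an already $\Sigma \setminus P$-closed set inside $\mathcal{L}_\Sigma$ and intersecting back with $\mathcal{L}_{\Sigma \setminus P}$ adds nothing. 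This is precisely the conservativity content of \dfp{7}: setting $\Gamma := Bel(\kappa)$, \Cref{th:marginalization_delgrande_equiv} identifies $Bel(\kappa_{|(\Sigma \setminus P)})$ with $\mathcal{F}(\Gamma, P)$ and, via \Cref{def:forget_delgrande_orig_signature}, identifies $Cn_\Sigma(Bel(\kappa_{|(\Sigma \setminus P)}))$ with $\mathcal{F}_O(\Gamma, P)$, so the claim becomes $\mathcal{F}(\Gamma, P) = \mathcal{F}_O(\Gamma, P) \cap \mathcal{L}_{\Sigma \setminus P}$, which is exactly \dfp{7}. The delicate point to verify carefully is that the lifting in \dfpessr{6} is taken over the full original signature $\Sigma$, so that \Cref{prop:lifting_belief} applies verbatim and the $Cn_\Sigma$ it produces is the very operator appearing in the definition of $\mathcal{F}_O$.
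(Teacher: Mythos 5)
Your proposal is correct and follows essentially the same strategy as the paper's proof: postulates \dfpessr{1}, \dfpessr{2}, \dfpessr{4}, and \dfpessr{5} are handled identically via \Cref{prop:forgetting_belief_1} and the language-intersection identities, and \dfpessr{6} via \Cref{prop:lifting_belief}. The only cosmetic differences are that the paper proves \dfpessr{3} by a model-level computation using \Cref{prop:ocf_models_marginalization} where you compose marginalisations at the belief level (both work, and yours is arguably more uniform with \dfpessr{5}), and that for the final conservativity step in \dfpessr{6} the paper asserts $Cn_\Sigma(Bel(\kappa_{|\Sigma \setminus P})) \cap \mathcal{L}_{\Sigma \setminus P} = Bel(\kappa_{|\Sigma \setminus P})$ directly rather than routing it through \dfp{7} and \Cref{th:marginalization_delgrande_equiv} as you do.
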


\begin{proof}[Proof of \Cref{th:dfpessr_marginalization}]
	In the following, we assume the epistemic states $\Psi$ and $\Phi$ to be OCFs, since the marginalisation is specifically defined over OCFs, denoted as $\kappa$ and $\kappa'$, and further denote the marginalisation $\kappa_{|\Sigma \setminus P}$ as $\kappa \forgetSm P$.

For \dfpessr{1}, we need to show $Bel(\kappa) \sat Bel(\kappa \forgetSm P)$, which means $  Bel(\kappa) \sat Bel(\kappa_{|(\Sigma \setminus P)})$. This holds due to  \Cref{lem:marginalization_equiv}.
For \dfpessr{2}, we presuppose $Bel(\kappa) \sat Bel(\kappa')$. Then also  $Bel(\kappa) \cap \mathcal{L}_{\Sigma \setminus P} \sat Bel(\kappa') \cap \mathcal{L}_{\Sigma \setminus P}$ which is equivalent to $Bel(\kappa_{|(\Sigma \setminus P)}) \sat Bel(\kappa'_{|(\Sigma \setminus P)})$ because of \Cref{prop:forgetting_belief_1}, and hence by definition, $Bel(\kappa \forgetSm P) \sat Bel(\kappa' \forgetSm P)$. 
	
Regarding \dfpessr{3}, we have the following equalities due to \Cref{prop:ocf_models_marginalization}, and because of $P' \subseteq P$: 
\begin{align*}
	& ~Bel((\kappa \forgetSm P') \forgetSm P) \\
	= & ~Bel(\kappa_{\Sigma \setminus P'} \forgetSm P) =  Bel((\kappa_{|\Sigma \setminus P'})_{|(\Sigma \setminus P') \setminus P}) \\ 
	= &~ Bel((\kappa_{|\Sigma \setminus P'})_{|(\Sigma \setminus (P' \cup P))}) =  Th(\lb (\kappa_{|\Sigma \setminus P'})_{|(\Sigma \setminus (P' \cup P))} \rb) \\
	= &~ \{ \phi \in \mathcal{L}_{\Sigma \setminus (P' \cup P)} \mid  \lb (\kappa_{|\Sigma \setminus P'})_{|(\Sigma \setminus (P' \cup P))} \rb \sat \phi \}\\
	= &~ \{ \phi \in \mathcal{L}_{\Sigma \setminus (P' \cup P)} \mid  \lb \kappa_{|\Sigma \setminus P'} \rb_{|\Sigma \setminus (P' \cup P)} \sat \phi \}\\
	= &~ \{ \phi \in \mathcal{L}_{\Sigma \setminus (P' \cup P)} \mid  (\lb \kappa \rb_{|\Sigma \setminus P'})_{|\Sigma \setminus (P' \cup P)} \sat \phi \} \\
	= &~ \{ \phi \in \mathcal{L}_{\Sigma \setminus (P' \cup P)} \mid  \lb \kappa \rb_{|\Sigma \setminus (P' \cup P)} \sat \phi \}\\
	= &~ \{ \phi \in \mathcal{L}_{\Sigma \setminus P} \mid \lb \kappa \rb_{|\Sigma \setminus P} \sat \phi \}  \\
	= &~ \{ \phi \in \mathcal{L}_{\Sigma \setminus P} \mid \lb \kappa_{|\Sigma \setminus P} \rb \sat \phi \} \\
	= &~ Th(\lb \kappa_{|\Sigma \setminus P} \rb) = Bel(\kappa_{|\Sigma \setminus P}) = Bel(\kappa \forgetSm P).
\end{align*}
	
The proof of \dfpessr{4} is mainly based on \Cref{prop:forgetting_belief_1}, here we compute 
\begin{align*}
	& ~ Bel(\kappa \forgetSm (P_1 \cup P_2)) = Bel(\kappa_{|\Sigma \setminus (P_1 \cup P_2)}) \\
	= &~ Bel(\kappa) \cap \mathcal{L}_{\Sigma \setminus (P_1 \cup P_2)}	= Bel(\kappa) \cap \mathcal{L}_{\Sigma \setminus P_1} \cap \mathcal{L}_{\Sigma \setminus P_2} \\ 
	= &~ Bel(\kappa) \cap Bel(\kappa) \cap \mathcal{L}_{\Sigma \setminus P_1} \cap \mathcal{L}_{\Sigma \setminus P_2} \\
	= &~ (Bel(\kappa) \cap \mathcal{L}_{\Sigma \setminus P_1}) \cap (Bel(\kappa) \cap \mathcal{L}_{\Sigma \setminus P_2}) \\
	= &~ Bel(\kappa_{|\Sigma \setminus P_1}) \cap Bel(\kappa_{|\Sigma \setminus P_2}) \\
	= &~ Bel(\kappa \forgetSm P_1) \cap Bel(\kappa \forgetSm P_2).
\end{align*}
	
Similarly for \dfpessr{5}, we have
\begin{align*}
	& ~ Bel(\kappa \forgetSm P_1 \cup P_2) = Bel(\kappa_{|\Sigma \setminus (P_1 \cup P_2)}) \\
	= &~ Bel(\kappa) \cap \mathcal{L}_{\Sigma \setminus (P_1 \cup P_2)} = Bel(\kappa) \cap \mathcal{L}_{\Sigma \setminus (P_1 \cup (P_1 \cup P_2))} \\
	= &~ Bel(\kappa) \cap (\mathcal{L}_{\Sigma \setminus P_1} \cap \mathcal{L}_{\Sigma \setminus (P_1 \cup P_2)})\\
	= &~ (Bel(\kappa) \cap \mathcal{L}_{\Sigma \setminus P_1}) \cap \mathcal{L}_{\Sigma \setminus (P_1 \cup P_2)} \\
	= &~ Bel(\kappa_{|\Sigma \setminus P_1}) \cap \mathcal{L}_{\Sigma \setminus (P_1 \cup P_2)} = Bel(\kappa_{|\Sigma \setminus P_1}) \cap \mathcal{L}_{(\Sigma \setminus P_1) \setminus P_2} \\
	= &~ Bel((\kappa_{|\Sigma \setminus P_1})_{|(\Sigma \setminus P_1) \setminus P_2}) = Bel((\kappa \forgetSm P_1)_{|(\Sigma \setminus P_1) \setminus P_2}) \\
	= &~ Bel((\kappa \forgetSm P_1) \forgetSm P_2).
\end{align*}

Finally, \Cref{prop:lifting_belief} is used for proving \dfpessr{6}:
\begin{align*}
	& ~ Bel((\kappa \forgetSm P)_{\uparrow \Sigma}) \cap \mathcal{L}_{\Sigma \setminus P} = Bel((\kappa_{|\Sigma \setminus P})_{\uparrow \Sigma}) \cap \mathcal{L}_{\Sigma \setminus P} \\
	= &~ Cn_\Sigma(Bel(\kappa_{|\Sigma \setminus P})) \cap \mathcal{L}_{\Sigma \setminus P} = Bel(\kappa_{|\Sigma \setminus P}) \cap \mathcal{L}_{\Sigma \setminus P} \\
	= &~ Bel(\kappa_{|\Sigma \setminus P}) = Bel(\kappa \forgetSm P). 
\end{align*}
\end{proof}

From \Cref{th:dfpessr_marginalization} above, we can also conclude that the marginalisation forms the signature forgetting
operator that 
only induces minimal changes to the prior beliefs.

\begin{proposition}\label{prop:marginalization_belief}
	Let $\kappa$ be an OCF over signature $\Sigma$, $P \subseteq \Sigma$ a subsignature, and $\forgetS$ an operator satisfying \dfpessr{1}-\dfpessr{6}, where $\kappa \forgetS P$ is an OCF over the reduced signature $\Sigma \setminus P$, then the following relation holds:
	\begin{equation*}
	Bel(\kappa_{|\Sigma \setminus P}) \sat Bel(\kappa \forgetS P)
	\end{equation*}
\end{proposition}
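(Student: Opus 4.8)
The plan is to establish the set inclusion $Bel(\kappa \forgetS P) \subseteq Bel(\kappa_{|\Sigma \setminus P})$ and then read it back as the asserted entailment. Since both sides are deductively closed belief sets, the entailment $Bel(\kappa_{|\Sigma \setminus P}) \sat Bel(\kappa \forgetS P)$ is by definition equivalent to the reverse inclusion $Bel(\kappa \forgetS P) \subseteq Bel(\kappa_{|\Sigma \setminus P})$. Proving this inclusion is what exhibits the marginalisation as the most informative (most specific) among all signature forgetting operators: whatever an arbitrary operator $\forgetS$ comes to believe after forgetting $P$ is already believed by the marginalisation.

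First I would apply \dfpessr{1} to the operator $\forgetS$, which yields $Bel(\kappa) \sat Bel(\kappa \forgetS P)$. Because $Bel(\kappa)$ is by definition deductively closed, namely $Bel(\kappa) = Th(\lb \kappa \rb)$, this entailment means that every formula in $Bel(\kappa \forgetS P)$ already lies in $Bel(\kappa)$ itself. Second, I would use the structural hypothesis of the proposition that $\kappa \forgetS P$ is an OCF over the reduced signature $\Sigma \setminus P$; by the definition of beliefs this forces $Bel(\kappa \forgetS P) \subseteq \mathcal{L}_{\Sigma \setminus P}$. Combining the two observations, each $\phi \in Bel(\kappa \forgetS P)$ satisfies both $\phi \in Bel(\kappa)$ and $\phi \in \mathcal{L}_{\Sigma \setminus P}$, hence $\phi \in Bel(\kappa) \cap \mathcal{L}_{\Sigma \setminus P}$.

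Finally, I would invoke \Cref{prop:forgetting_belief_1}, which identifies this intersection with the beliefs of the marginalisation, $Bel(\kappa) \cap \mathcal{L}_{\Sigma \setminus P} = Bel(\kappa_{|\Sigma \setminus P})$. This gives $\phi \in Bel(\kappa_{|\Sigma \setminus P})$ for every $\phi \in Bel(\kappa \forgetS P)$, i.e.\ the desired inclusion, and therefore the entailment in the statement. I do not expect a genuine obstacle here; the only point demanding care is the direction of the set-level symbol $\sat$, since for deductively closed belief sets $A \sat B$ corresponds to the reverse inclusion $B \subseteq A$. It is also worth emphasising that among the six postulates only \dfpessr{1} is actually used, together with the hypothesis that the forgetting result is an OCF over $\Sigma \setminus P$; the remaining postulates play no role in this minimality argument.
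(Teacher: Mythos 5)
Your proof is correct, and it takes a slightly leaner route than the paper's. The paper's own argument needs two of the postulates: it applies \dfpessr{1} to get $Bel(\kappa) \sat Bel(\kappa \forgetS P)$, but then invokes \dfpessr{6} (twice) to rewrite $Bel(\kappa \forgetS P)$ as $Bel((\kappa \forgetS P)_{\uparrow \Sigma}) \cap \mathcal{L}_{\Sigma \setminus P}$, precisely so that the formulas under consideration are guaranteed to lie in $\mathcal{L}_{\Sigma \setminus P}$ and \Cref{lem:marginalization_equiv} can be applied to transfer $\kappa \sat \phi$ to $\kappa_{|\Sigma \setminus P} \sat \phi$. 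You instead obtain the containment $Bel(\kappa \forgetS P) \subseteq \mathcal{L}_{\Sigma \setminus P}$ directly from the proposition's typing hypothesis that $\kappa \forgetS P$ is an OCF over $\Sigma \setminus P$, and then cite \Cref{prop:forgetting_belief_1} (itself a repackaging of \Cref{lem:marginalization_equiv}) to identify $Bel(\kappa) \cap \mathcal{L}_{\Sigma \setminus P}$ with $Bel(\kappa_{|\Sigma \setminus P})$. Both arguments rest on the same core idea --- anything believed after forgetting is a prior consequence expressible in the reduced language, and the marginalisation believes exactly all such formulas --- but your version isolates the fact that only \dfpessr{1} among the postulates is actually needed once the codomain of $\forgetS$ is fixed, which is a mildly sharper statement; the paper's version, by routing through \dfpessr{6} and the lifting, would also cover operators whose output is typed over the full signature $\Sigma$. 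Your handling of the direction of $\sat$ versus $\subseteq$ (including the observation that $Bel(\kappa)$ is closed under $Cn_\Sigma$ so that entailed formulas of $\mathcal{L}_{\Sigma\setminus P}$ are members) is careful and correct.
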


\begin{proof}[Proof of \Cref{prop:marginalization_belief}]
Because of \dfpessr{1} and \dfpessr{6}, we have 
$Bel(\kappa) \sat Bel(\kappa \forgetS P) = Bel((\kappa \forgetS P)_{\uparrow \Sigma}) \cap \mathcal{L}_{\Sigma \setminus P}$, hence 
 $\kappa \sat \phi$ for all $ \phi \in Bel((\kappa \forgetS P)_{\uparrow \Sigma}) \cap \mathcal{L}_{\Sigma \setminus P}$. 
 But then also, due to \Cref{lem:marginalization_equiv}, 
 $\kappa_{|\Sigma \setminus P} \sat \phi$ for all $\phi \in Bel((\kappa \forgetS P)_{\uparrow \Sigma}) \cap \mathcal{L}_{\Sigma \setminus P}$, which means $Bel(\kappa_{|\Sigma \setminus P}) \sat Bel((\kappa \forgetS P)_{\uparrow \Sigma}) \cap \mathcal{L}_{\Sigma \setminus P}$, and therefore again due to  \dfpessr{6}, $Bel(\kappa_{|\Sigma \setminus P}) \sat Bel(\kappa \forgetS P)$, which was to be shown. 
\end{proof}

Thus, we know that any signature forgetting operator other than the marginalisation must induce
further belief changes for some epistemic states and signatures. Such signature forgetting
operators could for example depend on some model prioritisation in addition to the epistemic
state and the signature itself.

\section{Forgetting Signatures vs. Forgetting Formulas -- A Triviality Result}\label{sec:triviality_result}

In the following, we want to discuss \dfp{1}-\dfp{7} displaying the right properties to describe
the general notions forgetting. In our opinion, these properties might display the \emph{right} properties when assuming forgetting as a reduction of the language, or as forgetting signature elements, respectively. Delgrande also comments on this, and argues that other (belief change) operators that could be considered as some kind of forgetting, e.g. contraction, should simply not be
considered as forgetting. We think that this view on the concept of forgetting as such is
debatable, since there exist multiple intuitively and cognitively different kinds of forgetting
\cite{beierle2019towards} from which Delgrande’s approach, which corresponds to the notion of focussing (\Cref{th:marginalization_delgrande_equiv}), only forms one. Therefore, it is still to be investigated whether \dfp{1}-\dfp{7} also states the right properties for other kinds of forgetting.

Following the overview of cognitively different kinds of forgetting presented in \cite{beierle2019towards}, it can be seen that the concept of focussing, i.e. the marginalisation, forms the only kind of forgetting that describes forgetting with respect to signatures. Thus, in order to investigate Delgrande’s forgetting properties for those kinds of forgetting that argue about formulas instead, we have to generalise and extend \dfp{1}-\dfp{7} such that they not only argue about arbitrary epistemic states and operators, but also about formulas. We refer to them as \dfpesl{1}-\dfpesl{6}. For this, let $\Psi, \Phi$ be epistemic states, $\phi, \psi \in \mathcal{L}$ formulas, and $\forgetL$ an arbitrary belief change operator:

\begin{description}
	\item[\dfpesl{1}] $Bel(\Psi) \sat Bel(\Psi \forgetL \phi)$
	\item[\dfpesl{2}] If $Bel(\Psi) \sat Bel(\Phi)$, then $Bel(\Psi \forgetL \phi) \sat Bel(\Phi \forgetL \phi)$
	\item[\dfpesl{3}] If $\phi \sat \psi$, then $Bel(\Psi \forgetL \phi) \equiv Bel((\Psi \forgetL \psi) \forgetL \phi)$
	\item[\dfpesl{4}] $Bel(\Psi \forgetL (\phi \vee \psi)) \equiv Bel(\Psi \forgetL \phi) \cap Bel(\Psi \forgetL \psi)$
	\item[\dfpesl{5}] $Bel(\Psi \forgetL (\phi \vee \psi)) \equiv Bel((\Psi \forgetL \phi) \forgetL \psi)$
	\item[\dfpesl{6}] If $\phi \not\equiv \top$, then $Bel(\Psi \forgetL \phi) \nsat \phi$
\end{description}

While the extension to \dfpesl{1}-\dfpesl{6} works almost analogously to the extension to
\dfpessr{1}-\dfpessr{6}, there exist some crucial differences, which we will address in the
following. In \dfp{4}, Delgrande argues about forgetting signature $P, P’$ for which we assume
that $P’$ is fully included in $P$. In order to extend and generalise this property, we have to
examine how this notion can be described with respect to formulas. We found it most accurate to
generalise this relation of the information we would like to forget by means of the specificity of
formulas, i.e. $\phi \sat \psi$. Thereby, we say that the knowledge described by $\psi$ is fully
included in that of $\phi$, if and only if $\psi$ can be inferred from $\phi$. More formally, this can be stated by means of the deductive closures of $\phi$ and $\psi$, i.e. $\phi \sat \psi \Leftrightarrow Cn(\psi) \subseteq Cn(\phi)$.

In \dfp{5} and \dfp{6}, Delgrande argues about forgetting two signatures $P, P’$ at once, which is
described as forgetting $P \cup P’$. On a more intuitive level this can be viewed as only forgetting
a single piece of information that consist of both the information we actual like to forget. When
arguing about formulas instead of signatures, this can be expressed by means of a disjunction $
\phi \vee \psi$, where $\phi$ and $\psi$ are the two formulas we actually want to forget. Even
though it might seem more appropriate to describe this idea by means of a conjunction $\phi
\wedge \psi$, it is not sufficient to forget the conjunction in order to forget both $\phi$ and $\psi$, since it is generally sufficient to forget one of the formulas in order to forget the conjunction as well. Thus, describing the unification of two pieces of information by means of a disjunction guarantees that both formulas can no longer be inferred after forgetting.

Just as for the postulates for forgetting signatures, we omit \dfp{3}, since a belief set is already
deductively closed by definition. Furthermore, we omit \dfp{7} since it argues about the relation of
forgetting in the reduced and in the original language, which is not applicable in case of forgetting
formulas. Instead, we introduce an additional postulate \dfpesl{6} that explicitly states the success of the forgetting operator, i.e. after forgetting a non-tautologous formula $\phi$, we are no longer able to infer $\phi$.

When extending \dfp{1}-\dfp{7} to forgetting formulas, Delgrande's idea that forgetting should be performed on the knowledge level, and therefore should be independent of the syntactic structure of the given knowledge also extends to the knowledge we want to forget. Thus, we show in \Cref{th:syntax_independence} the syntax independence implied by \dfpesl{1}-\dfpesl{6}. 

\begin{theorem}[Syntax Independence]\label{th:syntax_independence}
	Let $\Psi$ be an epistemic state and $\forgetL$ a belief change operator satisfying \dfpesl{1}-\dfpesl{6}. Further, let $\phi, \psi \in \mathcal{L}$ be formulas, then the following holds:
	\begin{equation*}
		\text{If } \phi \equiv \psi, \tthen Bel(\Psi \forgetL \phi) \equiv Bel(\Psi \forgetL \psi).
	\end{equation*}
\end{theorem}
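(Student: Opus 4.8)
The plan is to establish the two entailment directions $Bel(\Psi \forgetL \phi) \sat Bel(\Psi \forgetL \psi)$ and $Bel(\Psi \forgetL \psi) \sat Bel(\Psi \forgetL \phi)$ separately, exploiting that $\phi \equiv \psi$ unpacks into the two inference relations $\phi \sat \psi$ and $\psi \sat \phi$. The guiding idea is that \dfpesl{3} lets me rewrite a single act of forgetting as a nested one, while the monotony postulate \dfpesl{1} lets me bound such a nested belief set by the belief set obtained from the outer forgetting alone. I would deliberately avoid arguing via \dfpesl{4} or \dfpesl{5} by replacing $\phi \vee \psi$ with $\phi$, because $\phi \vee \psi$ is only logically equivalent to (not syntactically identical with) $\phi$, and treating the two as interchangeable would silently presuppose exactly the syntax independence we are trying to establish.

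For the first direction, I would instantiate \dfpesl{3} with the more specific formula taken to be $\phi$ and the more general one taken to be $\psi$; this is licensed because $\phi \sat \psi$. It yields $Bel(\Psi \forgetL \phi) \equiv Bel((\Psi \forgetL \psi) \forgetL \phi)$. Next I would apply \dfpesl{1} to the epistemic state $\Psi \forgetL \psi$ and the formula $\phi$, giving $Bel(\Psi \forgetL \psi) \sat Bel((\Psi \forgetL \psi) \forgetL \phi)$. Chaining the two through the equivalence and using that belief sets are deductively closed, I obtain $Bel(\Psi \forgetL \psi) \sat Bel(\Psi \forgetL \phi)$, i.e.\ $Bel(\Psi \forgetL \phi) \subseteq Bel(\Psi \forgetL \psi)$.

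The second direction is completely symmetric: swapping the roles of $\phi$ and $\psi$ and invoking the other half of the equivalence, $\psi \sat \phi$, the same two-step argument (\dfpesl{3} followed by \dfpesl{1}) delivers $Bel(\Psi \forgetL \psi) \subseteq Bel(\Psi \forgetL \phi)$. Combining both inclusions gives $Bel(\Psi \forgetL \phi) \equiv Bel(\Psi \forgetL \psi)$, as required.

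I expect the only real obstacle to be keeping the direction of $\sat$ straight: since $A \sat B$ means that $B$ is entailed by $A$ (so the belief set with more consequences sits on the left), one must check that \dfpesl{3} is applied in the orientation in which the hypothesis $\phi \sat \psi$ genuinely matches, and that the chaining with \dfpesl{1} produces the inclusion in the intended direction rather than its converse. Everything else is routine; notably the argument needs only \dfpesl{1} and \dfpesl{3}, whereas the disjunction postulates \dfpesl{4}--\dfpesl{5} and the success postulate \dfpesl{6} are not required here.
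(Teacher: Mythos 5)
Your proof is correct, and it takes a genuinely different route from the paper's. The paper derives both inclusions by combining \dfpesl{3} with \dfpesl{5} to rewrite $Bel(\Psi \forgetL \phi)$ as $Bel(\Psi \forgetL (\psi \vee \phi))$ (and symmetrically for $\psi$), and then decomposes the disjunction with \dfpesl{4} to obtain $Bel(\Psi \forgetL \phi) = Bel(\Psi \forgetL \psi) \cap Bel(\Psi \forgetL \phi) = Bel(\Psi \forgetL \psi)$. You instead bound the nested belief set directly via the monotony postulate \dfpesl{1} applied to the state $\Psi \forgetL \psi$, which gives $Bel(\Psi \forgetL \psi) \sat Bel((\Psi \forgetL \psi) \forgetL \phi) \equiv Bel(\Psi \forgetL \phi)$ by \dfpesl{3}, and then conclude by symmetry; your handling of the direction of $\sat$ is correct throughout. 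The payoff of your argument is that it isolates a strictly smaller set of responsible postulates: syntax independence already follows from \dfpesl{1} and \dfpesl{3} alone, whereas the paper's version relies on \dfpesl{4} and \dfpesl{5} (though not on \dfpesl{1}); this finer attribution is potentially useful for the paper's closing discussion of which postulates would have to be weakened to escape the triviality result. One small remark: your worry that arguing via \dfpesl{4} and \dfpesl{5} might silently presuppose syntax independence (by identifying $\phi \vee \psi$ with a logically equivalent formula) is unfounded here --- the paper never substitutes an equivalent formula inside the forgetting operator, and the disjunction postulates can be instantiated with the disjuncts in either order --- but the caution is methodologically sound.
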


\begin{proof}[Proof of \Cref{th:syntax_independence}]
From $\phi \equiv \psi$, we obtain with \dfpesl{3} and  \dfpesl{5}:
\begin{align*}
	&~ Bel(\Psi \forgetL \phi) = Bel((\Psi \forgetL \psi) \forgetL \phi) =  Bel(\Psi \forgetL \psi \vee \phi)
\end{align*} and
\begin{align*}
	 &~ Bel(\Psi \forgetL \psi) = Bel((\Psi \forgetL \phi) \forgetL \psi) = Bel(\Psi \forgetL \phi \vee \psi).
\end{align*}
Therefore,
\begin{align*}
	Bel(\Psi \forgetL \phi) &=  Bel(\Psi \forgetL \psi) \cap Bel(\Psi \forgetL \phi)\\ &= Bel(\Psi \forgetL \psi),
\end{align*}
due to \dfpesl{4}. 
\end{proof}

Next, we show that there cannot exist any non-trivial belief change operator satisfying \dfpesl{1}-\dfpesl{6}. For this, we first show that \dfpesl{3} together with \dfpesl{5} imply that the forgetting of any conjunction $\phi \wedge \psi$ must result in beliefs equivalent to just forgetting $\phi$ or $\psi$ (\Cref{prop:forgetting_conjunction_components}).

\begin{proposition}\label{prop:forgetting_conjunction_components}
	Let $\Psi$ be an epistemic state and $\forgetL$ a belief change operator satisfying \dfpesl{1}-\dfpesl{6}, then
	\begin{equation*}
	Bel(\Psi \forgetL \phi) \equiv Bel(\Psi \forgetL \phi \wedge \psi) \equiv Bel(\Psi \forgetL \psi)
	\end{equation*}
	holds for all formulas $\phi, \psi \in \mathcal{L}$.
\end{proposition}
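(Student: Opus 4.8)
The plan is to reduce the claim to the already-established Syntax Independence result (\Cref{th:syntax_independence}) together with the two ``iteration'' postulates \dfpesl{3} and \dfpesl{5}, exploiting the absorption equivalence $\phi \vee (\phi \wedge \psi) \equiv \phi$. The key observation driving the argument is that both \dfpesl{5} and \dfpesl{3} can be made to produce the \emph{same} iterated-forgetting term $Bel((\Psi \forgetL \phi) \forgetL (\phi \wedge \psi))$, so that chaining the two resulting equivalences through this common term collapses $Bel(\Psi \forgetL \phi)$ onto $Bel(\Psi \forgetL (\phi \wedge \psi))$.

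First I would record the two purely logical facts needed: $\phi \wedge \psi \sat \phi$ (a conjunction entails its conjunct) and $\phi \vee (\phi \wedge \psi) \equiv \phi$ (absorption). Next I would instantiate \dfpesl{5} with first disjunct $\phi$ and second disjunct $\phi \wedge \psi$, which gives $Bel(\Psi \forgetL (\phi \vee (\phi \wedge \psi))) \equiv Bel((\Psi \forgetL \phi) \forgetL (\phi \wedge \psi))$. Applying \Cref{th:syntax_independence} to the absorption equivalence rewrites the left-hand side as $Bel(\Psi \forgetL \phi)$, so transitivity of $\equiv$ yields $Bel(\Psi \forgetL \phi) \equiv Bel((\Psi \forgetL \phi) \forgetL (\phi \wedge \psi))$. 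Then I would instantiate \dfpesl{3} with the more specific formula $\phi \wedge \psi$ and the weaker formula $\phi$ (legitimate precisely because $\phi \wedge \psi \sat \phi$), obtaining $Bel(\Psi \forgetL (\phi \wedge \psi)) \equiv Bel((\Psi \forgetL \phi) \forgetL (\phi \wedge \psi))$. Since both equivalences identify the same nested term, transitivity gives $Bel(\Psi \forgetL \phi) \equiv Bel(\Psi \forgetL (\phi \wedge \psi))$.

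To finish, I would repeat the identical argument with the roles of $\phi$ and $\psi$ interchanged, now using $\psi \vee (\phi \wedge \psi) \equiv \psi$ for the \dfpesl{5} step and $\phi \wedge \psi \sat \psi$ for the \dfpesl{3} step, to obtain $Bel(\Psi \forgetL \psi) \equiv Bel(\Psi \forgetL (\phi \wedge \psi))$. Combining the two equivalences delivers the full chain $Bel(\Psi \forgetL \phi) \equiv Bel(\Psi \forgetL (\phi \wedge \psi)) \equiv Bel(\Psi \forgetL \psi)$. (An alternative route to one of the inclusions runs through \dfpesl{4} applied to $\phi \vee (\phi \wedge \psi)$ and \dfpesl{1}, but the \dfpesl{3}/\dfpesl{5} pairing above gives the equivalence directly.)

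The main obstacle is conceptual rather than computational: spotting that the absorption identity is exactly what makes the disjunction step (\dfpesl{5}) and the specificity step (\dfpesl{3}) land on a common iterated-forgetting term. One must be careful that the entailment fed into \dfpesl{3} is oriented correctly, with $\phi \wedge \psi$ as the antecedent (the more specific formula) and $\phi$ as the consequent, and that \dfpesl{5} is read with $\phi$ forgotten first and $\phi \wedge \psi$ second, so that the nesting order of the operators matches. Once these instantiations are aligned, the remaining steps are routine applications of transitivity of $\equiv$, and notably no appeal to the success postulate \dfpesl{6} is required.
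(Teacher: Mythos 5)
Your proposal is correct and follows essentially the same route as the paper's proof: both use \dfpesl{3} with $\phi \wedge \psi \sat \phi$, \dfpesl{5} with the disjunction $\phi \vee (\phi \wedge \psi)$, and \Cref{th:syntax_independence} via the absorption equivalence $\phi \vee (\phi \wedge \psi) \equiv \phi$, then argue symmetrically for $\psi$. The only difference is presentational (you chain two separate equivalences through the common nested term, while the paper writes it as one equational chain starting from $Bel(\Psi \forgetL \phi \wedge \psi)$), which is immaterial.
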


\begin{proof}[Proof of \textnormal{(\Cref{prop:forgetting_conjunction_components})}]
Using \dfpesl{3}, \dfpesl{5}, and \Cref{th:syntax_independence}, we compute
\begin{align*}
	& ~Bel(\Psi \forgetL \phi \wedge \psi) = Bel((\Psi \forgetL \phi) \forgetL \phi \wedge \psi) \\
	= &~ Bel(\Psi \forgetL \phi \vee (\phi \wedge \psi)) = Bel(\Psi \forgetL \phi).
\end{align*}
This  holds for $\psi$ analogously. Thus, we can conclude $Bel(\Psi \forgetL \phi) \equiv Bel(\Psi \forgetL \phi \wedge \psi) \equiv Bel(\Psi \forgetL \psi)$.
\end{proof}

From \Cref{prop:forgetting_conjunction_components} we can especially conclude that forgetting according to \dfpesl{1}-\dfpesl{6} must be independent of the formula we actually like to forget (\Cref{cor:forgetting_always_equivalent}).

\begin{corollary}\label{cor:forgetting_always_equivalent}
	Let $\Psi$ be an epistemic state and $\forgetL$ a belief change operator satisfying \dfpesl{1}-\dfpesl{6}, then
	\begin{equation*}
	Bel(\Psi \forgetL \phi) \equiv Bel(\Psi \forgetL \psi)
	\end{equation*}
	holds for all formulas $\phi, \psi \in \mathcal{L}$.
\end{corollary}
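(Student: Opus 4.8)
The plan is to obtain the statement immediately from \Cref{prop:forgetting_conjunction_components}, which has already carried out all of the substantive work. For arbitrary $\phi, \psi \in \mathcal{L}$ that proposition supplies the chain of equivalences
\[
	Bel(\Psi \forgetL \phi) \equiv Bel(\Psi \forgetL \phi \wedge \psi) \equiv Bel(\Psi \forgetL \psi),
\]
so that the only remaining task is to discard the intermediate term.

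First I would observe that equivalence of belief sets, being defined via equality of the corresponding model sets (equivalently, via mutual inference $\sat$), is transitive. Applying transitivity to the two equivalences above, with $Bel(\Psi \forgetL \phi \wedge \psi)$ playing the role of the intermediate term, directly yields $Bel(\Psi \forgetL \phi) \equiv Bel(\Psi \forgetL \psi)$, which is exactly the assertion. No case distinction on the shape of $\phi$ and $\psi$ is needed, since \Cref{prop:forgetting_conjunction_components} already holds uniformly for all formulas.

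There is essentially no obstacle at this level: the genuine content resides in \Cref{prop:forgetting_conjunction_components}, whose proof in turn relies on \dfpesl{3}, \dfpesl{5}, and the syntax-independence established in \Cref{th:syntax_independence}. Should one prefer a fully self-contained argument, one could inline that chain of reasoning here; but given the proposition, the corollary is a one-line consequence of transitivity of $\equiv$.
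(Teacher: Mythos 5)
Your proof is correct and matches the paper's approach exactly: the paper presents \Cref{cor:forgetting_always_equivalent} as an immediate consequence of \Cref{prop:forgetting_conjunction_components}, obtained by dropping the intermediate term $Bel(\Psi \forgetL \phi \wedge \psi)$ via transitivity of $\equiv$. Nothing further is needed.
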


Therefore, we know that a forgetting operator satisfying \dfpesl{1}-\dfpesl{6} must always forget all prior beliefs except for tautologies (\Cref{th:forget_tautology}).

\begin{theorem}[Triviality Result]\label{th:forget_tautology}
	Let $\Psi$ be an epistemic state. A belief change operator $\forgetL$ satisfies \dfpesl{1}-\dfpesl{6}, if and only if $Bel(\Psi \forgetL \phi) \equiv \top$ holds for each $\phi \in \mathcal{L}$.
\end{theorem}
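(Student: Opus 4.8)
The plan is to prove the two directions of the equivalence separately, with the forward direction being an almost immediate consequence of the machinery already built up, and the backward direction a routine verification.

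For the direction from left to right, I would fix an arbitrary $\phi \in \mathcal{L}$ and argue by contradiction. Assume $Bel(\Psi \forgetL \phi) \not\equiv \top$, so that this deductively closed belief set contains some non-tautologous formula $\chi$, i.e.\ $Bel(\Psi \forgetL \phi) \sat \chi$ with $\chi \not\equiv \top$. By \Cref{cor:forgetting_always_equivalent}, forgetting is independent of the formula being forgotten, hence $Bel(\Psi \forgetL \chi) \equiv Bel(\Psi \forgetL \phi)$ and therefore $Bel(\Psi \forgetL \chi) \sat \chi$. But since $\chi \not\equiv \top$, postulate \dfpesl{6} forces $Bel(\Psi \forgetL \chi) \nsat \chi$, a contradiction. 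Thus $Bel(\Psi \forgetL \phi) \equiv \top$, and since $\phi$ was arbitrary the claim follows.

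For the converse, I would assume $Bel(\Psi \forgetL \phi) \equiv \top$ for every $\phi$ and check each postulate in turn; all collapse to trivialities once both sides reduce to the set of tautologies. \dfpesl{1} and \dfpesl{2} hold because every belief set entails the tautologies, giving $Bel(\Psi) \sat \top$ and $\top \sat \top$. For \dfpesl{3}, \dfpesl{4}, and \dfpesl{5} both sides are equivalent to $\top$, using that $\Psi \forgetL \phi$ is again an epistemic state whose forgetting also yields $\top$, and that the intersection of two tautology sets is again the tautology set. Finally \dfpesl{6} holds because a non-tautologous $\phi$ is by definition not entailed by the tautologies, so $Bel(\Psi \forgetL \phi) \nsat \phi$.

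The only genuinely delicate point is the forward direction, and even there most of the work has already been done: the combination of \Cref{cor:forgetting_always_equivalent} (every formula is forgotten in the same way) with the success postulate \dfpesl{6} (a non-tautology is never retained after forgetting it) is exactly what rules out any non-tautologous survivor in the posterior beliefs. I expect no real obstacle in the backward direction beyond confirming that the trivial operator indeed maps epistemic states to epistemic states, so that iterated expressions such as $(\Psi \forgetL \phi) \forgetL \psi$ are well defined.
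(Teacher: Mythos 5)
Your proposal is correct and follows essentially the same route as the paper: the forward direction combines \Cref{cor:forgetting_always_equivalent} with \dfpesl{6} to rule out any non-tautologous consequence (you phrase it as a contradiction, the paper argues directly, but the content is identical), and the backward direction is the same routine verification that every postulate collapses when all posterior belief sets equal $Cn(\top)$.
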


\begin{proof}[Proof of \Cref{th:forget_tautology}]
	We prove \Cref{th:forget_tautology} in two steps. First, we show that if a belief change operator satisfies \dfpesl{1}-\dfpesl{6}, then it must always result in posterior beliefs $Bel(\Psi \forgetL \phi)$ equivalent to $\top$. Second, we show that each belief change operator $\forgetL$ with $Bel(\Psi \forgetL \phi) \equiv \top$ for each $\phi \in \mathcal{L}$ satisfies \dfpesl{1}-\dfpesl{6}. We refer to these two steps as $(\Rightarrow)$ and $(\Leftarrow)$. Note that we assume all formulas $\phi, \psi \in \mathcal{L}$ to be non-tautologous.\\
	
	\textit{Case} $(\Rightarrow)$:
	From \Cref{cor:forgetting_always_equivalent}, we know that applying $\forgetL$ to an epistemic state $\Psi$ must result in equivalent beliefs for all formulas $\phi, \psi \in \mathcal{L}$.
	From \dfpesl{6} we know that after forgetting a formula $\phi$, we are no longer able to infer $\phi$. Since the posterior beliefs are equivalent for all formulas, we can conclude that after applying $\forgetL$ to $\Psi$, we are not able to infer any formula, but tautologies.
	\begin{align*}
		& \begin{aligned}
			& Bel(\Psi \forgetL \phi) \equiv Bel(\Psi \forgetL \psi), \\
			& \quad \tall \phi, \psi \in \mathcal{L}
		\end{aligned} & \textnormal{(\Cref{cor:forgetting_always_equivalent})} \\
		\Rightarrow & Bel(\Psi \forgetL \phi) \nsat \phi, \psi, \tall \phi, \psi \in \mathcal{L} & \textnormal{\dfpesl{1}} \\
		\Leftrightarrow & ~ Bel(\Psi \forgetL \phi) \equiv \top, \tall \phi \in \mathcal{L}
	\end{align*}
	
	\textit{Case} $(\Leftarrow)$:
	Let $\Psi$ and $\Phi$ be epistemic states and $\phi, \psi \in \mathcal{L}$ be non-tautologous formulas, and $\forgetL$ a belief change operator with $Bel(\Psi \forgetL \phi) \equiv \top$ for all epistemic states $\Psi$ and formulas $\phi$. The fact that $\forgetL$ satisfies \dfpesl{1}-\dfpesl{6} directly concludes from the assumption $Bel(\Psi \forgetL \phi) \equiv \top$, for all $\phi \in \mathcal{L}$.

	We showed that both cases $(\Rightarrow)$ and $(\Leftarrow)$ hold, and therefore proved the triviality result stated in \Cref{th:forget_tautology}.
\end{proof}

\section{Conclusion}\label{sec:conclusion}
We discussed two of the existing approaches towards a general forgetting framework. The first approach was that of Delgrande \shortcite{delgrande2017knowledge} in which he gives a general forgetting definition that argues about forgetting on the knowledge level, and is capable of representing several of the hitherto existing logic-specific forgetting approaches, such as Boole's atom forgetting in propositional logic \cite{boole1854investigation}. The second approach was that of Beierle et al. \shortcite{beierle2019towards}. In contrast to Delgrande's approach, Beierle et al. define several cognitively different kinds of forgetting in a general OCF framework, which is generally more expressive than just arguing about knowledge sets. Thereby, we concretely focussed on the marginalisation or the concept of focussing as one kind of forgetting, respectively, which is of importance when it comes to efficient and focussed query answering. We showed that the marginalisation generalizes Delgrande's forgetting definition to epistemic states by resulting in equivalent posterior beliefs, as well as holding the same properties, which Delgrande referred to as \emph{right} and \emph{desirable}. Furthermore, this implies that the relations Delgrande elaborated between his and the logic-specific approaches also hold for the marginalisation. We exemplarily showed this by means of Boole's atom forgetting in propositional logic.
We think that \dfp{1}-\dfp{7}, or \dfpessr{1}-\dfpessr{6} respectively, describe properties that are \emph{right} and \emph{desirable} as long as we consider the forgetting of signature elements. 
However, we showed that these properties are not suitable for postulating properties for any kind of forgetting formulas, since generalizing these properties to formulas \dfpesl{1}-\dfpesl{6} implies the triviality result stated in \Cref{th:forget_tautology}.

In principle, we agree with Delgrande insofar that belief change operators like contraction are essentially different from the notion of forgetting as it is implemented by Delgrande's approach. However, we argue that Delgrande's approach and in general, approaches based on variable elimination, are too narrow to cover cognitive forgetting in its full generality. As our triviality result shows, Delgrande's postulates seem to be unsuitable for describing  the forgetting of formulas. Nevertheless, as the works of Beierle et al. \shortcite{beierle2019towards} show, very different kinds of forgetting are realizable in a common framework, distinguishable by different properties. So, as part of our future work, we pursue the research question which of Delgrande's postulates (which all seem very rational at first sight) need to be modified or omitted to make the idea of forgetting by variable elimination reconcilable to other forms of forgetting and how Delgrande's forgetting definition itself could be amended to satisfy the adapted postulates.

\bibliographystyle{kr}
\bibliography{kr-sample}

\begin{thebibliography}{}

\bibitem[\protect\citeauthoryear{Baral and Zhang}{2005}]{baral2005knowledge}
Baral, C., and Zhang, Y.
\newblock 2005.
\newblock Knowledge updates: Semantics and complexity issues.
\newblock {\em Artificial Intelligence} 164(1-2):209--243.

\bibitem[\protect\citeauthoryear{Beierle \bgroup et al\mbox.\egroup
  }{2019}]{beierle2019towards}
Beierle, C.; Kern-Isberner, G.; Sauerwald, K.; Bock, T.; and Ragni, M.
\newblock 2019.
\newblock Towards a general framework for kinds of forgetting in common-sense
  belief management.
\newblock {\em KI-K{\"u}nstliche Intelligenz} 33(1):57--68.

\bibitem[\protect\citeauthoryear{Boole}{1854}]{boole1854investigation}
Boole, G.
\newblock 1854.
\newblock {\em An investigation of the laws of thought: on which are founded
  the mathematical theories of logic and probabilities}.
\newblock Dover Publications.

\bibitem[\protect\citeauthoryear{Delgrande}{2017}]{delgrande2017knowledge}
Delgrande, J.~P.
\newblock 2017.
\newblock A knowledge level account of forgetting.
\newblock {\em Journal of Artificial Intelligence Research} 60:1165--1213.

\bibitem[\protect\citeauthoryear{Ellwart \bgroup et al\mbox.\egroup
  }{2019}]{ellwart2019intentional}
Ellwart, T.; Ulfert, A.-S.; Antoni, C.~H.; Becker, J.; Frings, C.; G{\"o}bel,
  K.; Hertel, G.; Kluge, A.; Mee{\ss}en, S.~M.; Niessen, C.; et~al.
\newblock 2019.
\newblock Intentional forgetting in socio-digital work systems: system
  characteristics and user-related psychological consequences on emotion,
  cognition, and behavior.
\newblock {\em AIS Transactions on Enterprise Systems} 4(1).

\bibitem[\protect\citeauthoryear{Kluge \bgroup et al\mbox.\egroup
  }{2019}]{kluge2019investigating}
Kluge, A.; Sch{\"u}ffler, A.~S.; Thim, C.; Haase, J.; and Gronau, N.
\newblock 2019.
\newblock Investigating unlearning and forgetting in organizations.
\newblock {\em The Learning Organization}.

\bibitem[\protect\citeauthoryear{Lin and Reiter}{1994}]{lin1994forget}
Lin, F., and Reiter, R.
\newblock 1994.
\newblock Forget it.
\newblock In {\em Working Notes of AAAI Fall Symposium on Relevance},
  154--159.

\bibitem[\protect\citeauthoryear{Spohn}{1988}]{spohn1988ordinal}
Spohn, W.
\newblock 1988.
\newblock Ordinal conditional functions: A dynamic theory of epistemic states.
\newblock In {\em Causation in decision, belief change, and statistics}.
  Springer.
\newblock  105--134.

\bibitem[\protect\citeauthoryear{Wong}{2009}]{wong2009forgetting}
Wong, K.-S.
\newblock 2009.
\newblock {\em Forgetting in logic programs}.
\newblock Ph.D. Dissertation, Ph. D. thesis, The University of New South Wales.

\bibitem[\protect\citeauthoryear{Zhang and Foo}{2006}]{zhang2006solving}
Zhang, Y., and Foo, N.~Y.
\newblock 2006.
\newblock Solving logic program conflict through strong and weak forgettings.
\newblock {\em Artificial Intelligence} 170(8-9):739--778.

\end{thebibliography}

\end{document}